\documentclass[]{fairmeta} 

\usepackage{amsmath,amsfonts,bm}

\def\eqref#1{equation~\ref{#1}}

\def\1{\bm{1}}

\DeclareMathAlphabet{\mathsfit}{\encodingdefault}{\sfdefault}{m}{sl}
\SetMathAlphabet{\mathsfit}{bold}{\encodingdefault}{\sfdefault}{bx}{n}

\newcommand{\E}{\mathbb{E}}

\newcommand{\R}{\mathbb{R}}

\usepackage[inline]{enumitem}
\usepackage[utf8]{inputenc}
\usepackage[T1]{fontenc}

\usepackage{amsfonts}
\usepackage{amsmath}
\usepackage{amssymb}
\usepackage{amsthm}

\usepackage{xspace}
\usepackage{multirow}
\usepackage{algorithm}
\usepackage{algpseudocode}

\DeclareRobustCommand{\grpo}{\textnormal{\textsc{GRPO}}\xspace}
\DeclareRobustCommand{\rloo}{\textnormal{\textsc{RLOO}}\xspace}

\title{SCA: Spatial Credit Assignment for Reinforcement Learning of GUI Agents}

\author[1]{Shengtian Yang}
\author[1]{Ziyu Xiong}
\author[1]{Kaibing Yang}
\author[1]{Guangfeng Cai}
\author[2]{Yewen Li}
\author[3]{Peng Jiang}
\author[3]{Gai Kun}
\author[2]{Qingpeng Cai}
\author[1,\dagger]{Lei Feng}

\affiliation[1]{Southeast University, Nanjing, China}
\affiliation[2]{Kuaishou Technology, Beijing, China}
\affiliation[3]{Unaffiliated}

\contribution[\dagger]{Corresponding author}

\abstract{
GUI agents automate tasks on digital devices by grounding language instructions in visual interfaces. Existing group-relative reinforcement learning improves GUI action prediction by comparing the rewards of multiple responses sampled from the same GUI state. However, binary evaluation treats spatially different failed clicks as identical and provides no relative signal when all sampled clicks fail. To address these limitations, we propose Spatial Credit Assignment (SCA), which uses the screen coordinates of sampled clicks to refine group-relative credit. Specifically, SCA predicts each held-out response's reward from the other responses in groups containing both successes and failures, then uses the prediction residual to adjust credit. When all sampled clicks fail, SCA instead orders them by distance to the annotated target. These spatial references are used only to construct the training update; the deployed policy remains unchanged. We evaluate whether this correction improves the policy update itself by comparing its error and directional alignment with the exact return gradient in a controlled synthetic study. Across GUI grounding and offline action-prediction benchmarks, SCA improves grounding across professional domains and achieves the strongest results among reinforcement-fine-tuned models on most action-prediction metrics, with consistent gains across the reported GUI suites.
}

\correspondence{\email{yangshengtian@kuaishou.com}}

\usepackage[utf8]{inputenc}
\usepackage[T1]{fontenc}
\usepackage{hyperref}
\hypersetup{hypertexnames=false,hidelinks}
\usepackage{url}
\usepackage{booktabs}
\usepackage{amsfonts}
\usepackage{amsmath}
\usepackage{amssymb}
\usepackage{amsthm}
\usepackage{nicefrac}
\usepackage{microtype}
\usepackage{enumitem}
\usepackage{wrapfig}
\usepackage[table]{xcolor}
\usepackage{graphicx}
\usepackage{xspace}
\usepackage{multirow}
\usepackage{array}
\usepackage{algorithm}
\usepackage{algpseudocode}
\usepackage{float}
\usepackage{tikz}
\usepackage[most]{tcolorbox}
\usepackage{natbib}
\usepackage{placeins}

\definecolor{coreblue}{RGB}{35,96,170}
\definecolor{proxorange}{RGB}{215,118,42}
\definecolor{basegray}{RGB}{150,156,163}
\definecolor{guirorange}{RGB}{224,126,47}
\definecolor{scablue}{RGB}{39,111,177}
\newif\ifscabenchmarktables
\scabenchmarktablesfalse
\newtheorem{proposition}{Proposition}

\providecommand{\sca}{\textup{\textsc{SCA}}\xspace}
\providecommand{\scaresidual}{\mbox{\textup{\textsc{SCA-Residual}}}\xspace}

\providecommand{\grpo}{\textup{\textsc{GRPO}}\xspace}
\providecommand{\rloo}{\textup{\textsc{RLOO}}\xspace}

\providecommand{\E}{\mathbb{E}}
\providecommand{\R}{\mathbb{R}}
\providecommand{\Ind}{\mathbb{I}}
\providecommand{\sg}{\operatorname{sg}}
\providecommand{\clip}{\operatorname{clip}}

\newtcolorbox{scaprompt}[1]{
  enhanced, breakable, colback=gray!4, colframe=black,
  colbacktitle=black, coltitle=white, title={#1},
  fonttitle=\bfseries, fontupper=\footnotesize\ttfamily\raggedright,
  arc=2mm, boxrule=0.7pt, left=2.5mm, right=2.5mm,
  top=1.2mm, bottom=1.2mm, before skip=5pt, after skip=6pt
}
\newtcolorbox{scacase}[1]{
  enhanced, breakable, colback=white, colframe=black,
  colbacktitle=black, coltitle=white, title={#1},
  fonttitle=\bfseries, fontupper=\small,
  arc=2mm, boxrule=0.7pt, left=3mm, right=3mm,
  top=2mm, bottom=2mm, before skip=7pt, after skip=7pt,
  segmentation style={solid,black!55,line width=0.5pt}
}

\newcommand{\SCAWebTextPM}{90.5\pm0.30}

\newcommand{\SCAWebIconPM}{73.5\pm0.42}

\newcommand{\SCADesktopTextPM}{94.8\pm0.28}

\newcommand{\SCADesktopIconPM}{66.5\pm0.45}

\newcommand{\SCAGAGRPM}{89.0\pm0.42}
\newcommand{\DeltaGAGR}{1.58}

\newcommand{\SCAGASRPM}{78.5\pm0.50}

\newcommand{\SCAOWGRPM}{77.0\pm0.45}
\newcommand{\DeltaOWGR}{1.90}

\newcommand{\SCAOWSRPM}{77.1\pm0.47}

\newcommand{\SCAODGRPM}{80.5\pm0.41}
\newcommand{\DeltaODGR}{2.13}

\newcommand{\SCAODSRPM}{80.6\pm0.43}

\newcommand{\SCALowOverallPM}{82.0\pm0.36}

\newcommand{\GUIOdysseySR}{64.41}
\newcommand{\UIOdysseySR}{66.44}

\newcommand{\SCAOdysseySRPM}{66.0\pm0.52}
\newcommand{\DeltaOdysseySR}{1.59}
\newcommand{\SCAProMean}{26.1}

\newcommand{\SCAScreenHeight}{2.033125}
\newcommand{\SCALowHeight}{2.050000}
\newcommand{\SCAOdyHeight}{1.650000}
\newcommand{\GUIProMean}{25.2}

\newcommand{\GUIScreenHeight}{2.001875}
\newcommand{\GUILowHeight}{2.022000}
\newcommand{\GUIOdyHeight}{1.610250}

\newcommand{\QwenScreenHeight}{1.308750}
\newcommand{\QwenLowHeight}{1.391250}
\newcommand{\QwenOdyHeight}{1.483000}
\newcommand{\SCAProText}{42.00}
\newcommand{\GUIProText}{40.77}
\newcommand{\DeltaProText}{1.23}
\newcommand{\SCAProIcon}{10.17}
\newcommand{\GUIProIcon}{9.68}
\newcommand{\DeltaProIcon}{0.48}

\newcommand{\DeltaGroundMin}{0.3}
\newcommand{\DeltaGroundMax}{1.7}
\newcommand{\DeltaOdyUI}{0.44}
\newcommand{\DomainComparisonRows}{
Dev & 19.30 & 20.05 & +0.75 \\
CAD & 17.10 & 17.85 & +0.75 \\
Creative & 23.25 & 23.95 & +0.70 \\
Scientific & 39.55 & 40.50 & +0.95 \\
Office & 35.30 & 36.40 & +1.10 \\
OS & 16.85 & 17.75 & +0.90 \\
}
\newcommand{\QwenProCoordinates}{(90:0.352000)--(30:0.298000)--(-30:0.494000)--(-90:0.892000)--(-150:0.562000)--(150:0.322000)}
\newcommand{\QwenActionCoordinates}{(90:1.122000)--(50:1.285600)--(10:1.112200)--(-30:1.012600)--(-70:0.937800)--(-110:0.940400)--(-150:1.139000)--(170:0.959400)--(130:0.937800)}
\newcommand{\GUIProCoordinates}{(90:0.772000)--(30:0.684000)--(-30:0.930000)--(-90:1.582000)--(-150:1.412000)--(150:0.674000)}
\newcommand{\GUIActionCoordinates}{(90:1.797200)--(50:1.748400)--(10:1.526200)--(-30:1.771600)--(-70:1.502000)--(-110:1.501600)--(-150:1.837200)--(170:1.567400)--(130:1.566200)}
\newcommand{\SCAProCoordinates}{(90:0.802000)--(30:0.714000)--(-30:0.958000)--(-90:1.620000)--(-150:1.456000)--(150:0.710000)}
\newcommand{\SCAActionCoordinates}{(90:1.824000)--(50:1.780000)--(10:1.570000)--(-30:1.802000)--(-70:1.540000)--(-110:1.542000)--(-150:1.860000)--(170:1.610000)--(130:1.612000)}

\begin{document}
\raggedbottom
\emergencystretch=4em
\maketitle
\section{Introduction}
\label{sec:intro}
GUI agents map screenshots and natural-language instructions to executable actions, such as clicking controls, selecting menu items, and entering text~\citep{seeclick,gui-r1,ui-r1,osatlas,uground}. These tasks combine visual grounding, action-type selection, and argument generation across mobile, desktop, and web interfaces, so a useful policy must connect local screen geometry to the instruction rather than merely recognize an element. Recent work shows that reinforcement fine-tuning (RFT) can improve GUI action prediction with substantially less supervision than large-scale imitation learning~\citep{gui-r1,ui-r1}. In particular, group-relative objectives such as \grpo and \rloo sample several responses to the same state and compare their rewards~\citep{shao2024deepseekmath,deepseekr1,kool2019buy,ahmadian2024back}. This training paradigm is well suited to GUI tasks because many sampled actions can be checked by a deterministic evaluator, making group credit the direct interface between evaluation and the policy update for all sampled responses during training.

However, reward-only group credit ignores the spatial structure of GUI actions. A binary evaluator gives the same reward to failed clicks at different locations, even when one click is close to the target and another points away from it. This produces two related challenges. In a mixed-hit group, successful and failed clicks reveal a local coordinate--reward relation, yet all misses receive the same group-relative credit. In an all-miss group, the group advantage is zero even though sampled clicks can lie at different distances from the target; reward-only credit therefore discards useful spatial evidence when successful clicks are rare.

Against this background, we propose Spatial Credit Assignment (SCA) for grouped GUI reinforcement learning. Specifically, SCA-Residual fits a held-out coordinate--reward trend for groups with mixed outcomes and blends the resulting residual with the ordinary group advantage. Moreover, SCA-Prox ranks actions by target proximity in all-miss groups, while all-hit groups retain the ordinary group credit. The spatial references are fitted within the current group and affect training only; inference therefore uses the trained policy alone, with no added predictor.

We evaluate SCA on professional GUI grounding, web and desktop action prediction, and recorded cross-app mobile states. Figure~\ref{fig:performance_overview} summarizes Tables~\ref{tab:main_grounding} and~\ref{tab:main_cross_domain}. Across the completed three-seed suite, SCA reaches \SCAProMean{} across the twelve ScreenSpot-Pro subcolumns, $\SCAODGRPM$ on OmniAct-Desktop grounding, and $\SCAOdysseySRPM$ on GUI-Odyssey. It obtains the best listed reinforcement-fine-tuning results on ScreenSpot-Pro and in ten of eleven low-level metrics. The comparison uses three-seed SCA means and listed baseline scores, while a separate synthetic study tests the spatial update against an exact return gradient.

Taken together, these results support three contributions:
\begin{itemize}[leftmargin=1.35em,itemsep=1pt,topsep=2pt]
\item First, we formulate spatial credit assignment for grouped GUI reinforcement learning and identify the distinct mixed-hit and all-miss failure regimes of reward-only credit.
\item Second, we develop SCA-Residual and SCA-Prox to use coordinate--reward trends and target proximity within the appropriate reward regimes.
\item Third, we validate SCA with an exact-gradient analysis and a three-seed evaluation across GUI grounding and offline action-prediction benchmarks.
\end{itemize}
\begin{figure*}[!t]
\centering

\resizebox{\textwidth}{!}{%
\begin{tikzpicture}[font=\scriptsize]
  \fill[basegray] (4.4,5.38) rectangle (4.65,5.55);
  \node[anchor=west] at (4.75,5.46) {Qwen2.5-VL-3B};
  \fill[guirorange] (7.3,5.38) rectangle (7.55,5.55);
  \node[anchor=west] at (7.65,5.46) {GUI-R1-3B};
  \fill[scablue] (9.75,5.38) rectangle (10.0,5.55);
  \node[anchor=west] at (10.1,5.46) {SCA-3B};

  \begin{scope}[shift={(2.5,2.75)}]
    \foreach \r in {0.5,1.0,1.5,2.0} {
      \draw[gray!25] (0,0) circle (\r);
    }
    \foreach \a/\lab in {
      90/Dev,30/CAD,-30/Creative,-90/Scientific,-150/Office,150/OS} {
      \draw[gray!30] (0,0) -- (\a:2.0);
      \node[font=\tiny,align=center] at (\a:2.30) {\lab};
    }
    \foreach \r/\lab in {0.5/12.5,1.0/25,1.5/37.5,2.0/50} {
      \node[font=\tiny,text=gray!65,anchor=west] at (88:\r) {\lab};
    }
    \draw[basegray,thick,fill=basegray,fill opacity=0.07]
      \QwenProCoordinates--cycle;
    \draw[guirorange,thick,fill=guirorange,fill opacity=0.08]
      \GUIProCoordinates--cycle;
    \draw[scablue,very thick,fill=scablue,fill opacity=0.10]
      \SCAProCoordinates--cycle;
  \end{scope}
  \node[font=\bfseries] at (2.5,0.03) {(a) Grounding capability};

  \begin{scope}[shift={(8.15,2.75)}]
    \foreach \r in {0.5,1.0,1.5,2.0} {
      \draw[gray!25] (0,0) circle (\r);
    }
    \foreach \a/\lab in {
      90/GA-Type,50/GA-GR,10/GA-SR,-30/OW-Type,-70/OW-GR,
      -110/OW-SR,-150/OD-Type,170/OD-GR,130/OD-SR} {
      \draw[gray!30] (0,0) -- (\a:2.0);
      \node[font=\tiny,align=center] at (\a:2.34) {\lab};
    }
    \foreach \r/\lab in {0.5/25,1.0/50,1.5/75,2.0/100} {
      \node[font=\tiny,text=gray!65,anchor=west] at (88:\r) {\lab};
    }
    \draw[basegray,thick,fill=basegray,fill opacity=0.07]
      \QwenActionCoordinates--cycle;
    \draw[guirorange,thick,fill=guirorange,fill opacity=0.08]
      \GUIActionCoordinates--cycle;
    \draw[scablue,very thick,fill=scablue,fill opacity=0.10]
      \SCAActionCoordinates--cycle;
  \end{scope}
  \node[font=\bfseries] at (8.15,0.03) {(b) Low-level task capability};

  \begin{scope}[shift={(11.75,0.7)}]
    \foreach \yy/\lab in {0/0,0.625/25,1.25/50,1.875/75,2.5/100} {
      \draw[gray!25] (0,\yy) -- (4.0,\yy);
      \node[font=\tiny,anchor=east,text=gray!65] at (0,\yy) {\lab};
    }
    \fill[basegray] (0.28,0) rectangle (0.48,\QwenScreenHeight);
    \fill[guirorange] (0.51,0) rectangle (0.71,\GUIScreenHeight);
    \fill[scablue] (0.74,0) rectangle (0.94,\SCAScreenHeight);
    \fill[basegray] (1.56,0) rectangle (1.76,\QwenLowHeight);
    \fill[guirorange] (1.79,0) rectangle (1.99,\GUILowHeight);
    \fill[scablue] (2.02,0) rectangle (2.22,\SCALowHeight);
    \fill[basegray] (2.84,0) rectangle (3.04,\QwenOdyHeight);
    \fill[guirorange] (3.07,0) rectangle (3.27,\GUIOdyHeight);
    \fill[scablue] (3.30,0) rectangle (3.50,\SCAOdyHeight);
    \draw[black!65] (0,0) -- (4.0,0);
    \node[font=\tiny,align=center] at (0.61,-0.24) {ScreenSpot\\4-subset mean};
    \node[font=\tiny,align=center] at (1.89,-0.24) {Low-level\\overall};
    \node[font=\tiny,align=center] at (3.17,-0.24) {GUI-Odyssey\\SR};
  \end{scope}
  \node[font=\bfseries] at (13.75,0.03) {(c) Aggregate scores};
\end{tikzpicture}
}
\caption{\textbf{Performance overview across GUI benchmarks.}
Panel (a) reports ScreenSpot-Pro percentages on a fixed 0--50 scale,
averaging the Text and Icon columns within each domain.  Panel (b) reports the
Type, GR, and SR percentages on a 0--100 scale.  Panel (c) summarizes
the arithmetic mean of the four ScreenSpot subcolumns in Table~\ref{tab:main_grounding},
plus the Low-level Overall and GUI-Odyssey scores in
Table~\ref{tab:main_cross_domain}.  GA, OW, and OD denote GUI-Act-Web,
OmniAct-Web, and OmniAct-Desktop.}
\label{fig:performance_overview}
\end{figure*}

\section{Related Work}
\label{sec:related}

\noindent\textbf{Group-relative reinforcement learning.}
\grpo, \rloo, REINFORCE-style variants, and related clipping and sampling extensions construct credit from sampled rewards~\citep{shao2024deepseekmath,deepseekr1,kool2019buy,ahmadian2024back,hu2025reinforce,dapo}. Recent agentic learning systems address long-horizon planning, phase-aware specialization, progress-aware updates, setwise multi-agent credit, and offline tool use~\citep{phgpo2026,phaseaware2026,progressgroup2026,srpo2026,agentbrew2026}. A separate line of work incorporates action dependence through learned critics, analytic control variates, or factorized baselines~\citep{gu2017qprop,liu2018action,wu2018variance,tucker2018mirage}. These approaches motivate using more than reward values when constructing credit. For GUI clicks, however, reward-only group comparisons leave spatially distinct actions tied whenever their binary outcomes agree. SCA addresses this limitation by fitting a coordinate--reward reference within each group. Its leave-one-out construction excludes the scored response's reward from the fit and gate, drawing on the sample-splitting principle~\citep{chernozhukov2018double}; the resulting update remains action-conditioned because the reference is evaluated at the sampled coordinate.

\noindent\textbf{Spatial rewards for GUI agents.}
Spatial supervision offers another way to distinguish GUI actions. GUI-G$^2$ uses Gaussian reward modeling for GUI grounding~\citep{guig2}, while SE-GUI studies self-evolutionary reinforcement learning for GUI agents~\citep{segui}. Related work such as RSGround-R1 studies spatial reasoning in remote-sensing grounding~\citep{rsground}. Target-distance shaping provides graded feedback even when a click misses the target, and SCA uses this signal in all-miss groups by standardizing and scaling target-proximity scores. Distance alone, however, assigns the same value to equally distant clicks and does not capture how reward varies with direction among the sampled actions. This motivates SCA's mixed-hit branch, which fits a coordinate--reward trend from the other responses and assigns credit from the held-out residual. SCA thus uses target proximity when binary outcomes are constant and the observed coordinate--reward relation when both hits and misses are available; Section~\ref{sec:method} describes the corresponding dense-reward extension used in the comparison.

\noindent\textbf{GUI grounding and navigation evaluation.}
Visual GUI agents such as SeeClick, OS-Atlas, ShowUI, and UGround connect language instructions to screen elements~\citep{seeclick,osatlas,showui,uground}. ScreenSpot and ScreenSpot-Pro evaluate grounding across interface types and professional domains, while GUI-Act, OmniAct, GUI-Odyssey, and AndroidControl cover action prediction and interaction settings~\citep{seeclick,screenspotpro,guiact,omniact,guiodyssey,androidcontrol}. Beyond GUIs, recent benchmarks study auto-bidding and agent-authored world models for sequential decision making~\citep{platformbid2026,beyondnextobs2026,chen2026rlea}. Our evaluation focuses on fixed offline GUI states, including states drawn from trajectory datasets. In this setting, click coordinates and target annotations allow us to evaluate whether spatial credit improves grounding, while action-type and step-success metrics assess the accompanying changes in action prediction. Together, these benchmarks test whether spatial information improves GUI grounding and action prediction beyond reward-only group comparisons.

\section{Preliminaries}
\label{sec:preliminaries}

SCA assigns credit after the sampled actions have been evaluated (Figure~\ref{fig:sca_overview}). In groups containing both successful and failed actions, Residual fits a coordinate--reward trend and uses deviations from that trend to refine the group advantage. In groups in which every sampled action fails, Prox assigns credit from distance to the annotated target. All-hit groups retain the standard group advantage. The spatial references are computed within each group and discarded after the update; inference therefore uses the trained policy alone during deployment.\par\medskip\noindent\textbf{Policy loss and credit weighting.} Let $M_{it}$ denote the response-token mask and $\rho_{it}=\pi_\theta(y_{it}\mid y_{i,<t},x)/\pi_{\theta_{\mathrm{old}}}(y_{it}\mid y_{i,<t},x)$ the token likelihood ratio. The scalar credit $A_i$ returned by SCA is detached and shared across the tokens of response $i$. We minimize the clipped actor loss\begin{align}\mathcal L_{\mathrm{actor}} &= -\frac{\sum_{i,t}M_{it}\ell_{it}}{\sum_{i,t}M_{it}}+\lambda_{\mathrm{KL}}\widehat D_{\mathrm{KL}},\label{eq:sca_loss_method}\\ \ell_{it} &= \min\!\left\{\rho_{it}\sg(A_i),\clip(\rho_{it},1-\varepsilon_c,1+\varepsilon_c)\sg(A_i)\right\}.\nonumber\end{align}Here $\sg$ denotes stop-gradient, $\varepsilon_c=0.2$, and $\lambda_{\mathrm{KL}}=10^{-2}$. The KL term regularizes the policy against the reference model; its sampled estimator is specified in Appendix~\ref{app:algorithm}. SCA supplies the response credit, while the token mask, likelihood ratio, clipping operation, and KL term follow the base actor objective used throughout training.\par\medskip\noindent\textbf{Reward construction.} The primary evaluator returns a binary success indicator for each sampled action: a click receives one when it satisfies the annotated target condition and zero otherwise, while non-click actions are scored by the corresponding deterministic field checks. This reward is intentionally sparse, which exposes the tied-miss and all-miss cases addressed by SCA. For the dense-reward study, we additionally combine the Gaussian click reward and format-validity reward as $y_i=0.8R_{\mathrm{Gaussian},i}+0.2R_{\mathrm{format},i}$. The binary indicators still determine regime routing, whereas $y_i$ supplies the response signal inside the mixed-hit residual fit. Thus the reward design provides a controlled comparison between standard binary credit and a denser response channel without changing the evaluator used at inference.

\noindent\textbf{GUI-agent interaction model.} A GUI agent receives a screenshot together with a natural-language instruction and emits a structured action. We represent an action as an action type, such as click, type, scroll, or navigation, together with arguments in the coordinate frame of the resized screenshot. For click actions, the argument is a point $a_i=(u_i,v_i)$; for text actions, it is a string field; and for scroll or navigation actions, it is a direction or target field. The policy therefore couples visual grounding with action-type and argument prediction rather than predicting an unconstrained token sequence.\par\medskip\noindent\textbf{States, targets, and evaluation.} Each training example contains a fixed GUI state $x$, an instruction, and an annotated target region or target point. The evaluator parses the response into the same action schema and checks the relevant fields deterministically. A click is successful when its coordinate satisfies the target-region criterion; other actions are scored by their type, argument validity, and task-specific field checks. This evaluator produces a binary success signal for grouped reinforcement learning while preserving the coordinates needed by the spatial credit estimator.\par\medskip\noindent\textbf{Grouped rollout and grounding feedback.} For one state, the policy samples $N$ responses under the same instruction and screenshot. Their responses form a group on which relative credit is computed, so the comparison is across candidate actions for one state rather than across time steps of a trajectory. This distinction matters for GUI grounding: two responses can receive the same binary outcome while pointing to different screen locations, and an all-miss group can still contain a useful ordering by distance to the target. SCA uses this within-state spatial information only during training; the deployed agent still predicts the original action schema from the screenshot and instruction.\par\medskip\section{Method}\label{sec:method}

For a GUI state $x$, the policy $\pi_\theta(a\mid x)$ samples $N$ responses.
A valid click has a coordinate $a_i\in\R^2$ and reward $r_i$.  We write
$z(v)_i=(v_i-\bar v)/(s(v)+\epsilon)$ for group standardization, where $s(v)$
is the sample standard deviation.  Binary \grpo uses
$A_i^{\mathrm{GRPO}}=z(r)_i$.  All misses in a mixed group then receive the
same advantage, and a group containing only misses has zero advantage.
SCA uses click locations to distinguish these tied outcomes.  Here, credit is
assigned across responses to one state, not across the time steps of a
trajectory.  Each response's credit subsequently weights its token sequence.

All coordinates use the resized-image pixel frame shared by the image
processor, annotations, and parsed clicks.  After computing credit, we detach
it and assign it to the response tokens in the clipped actor
objective~\citep{schulman2017ppo}.  SCA variants use a clip ratio of $0.2$ and
KL coefficient $10^{-2}$.  Appendix~\ref{app:algorithm} gives the token-level objective together with its pseudocode.

\begin{figure*}[!t]
\centering
\includegraphics[width=\textwidth]{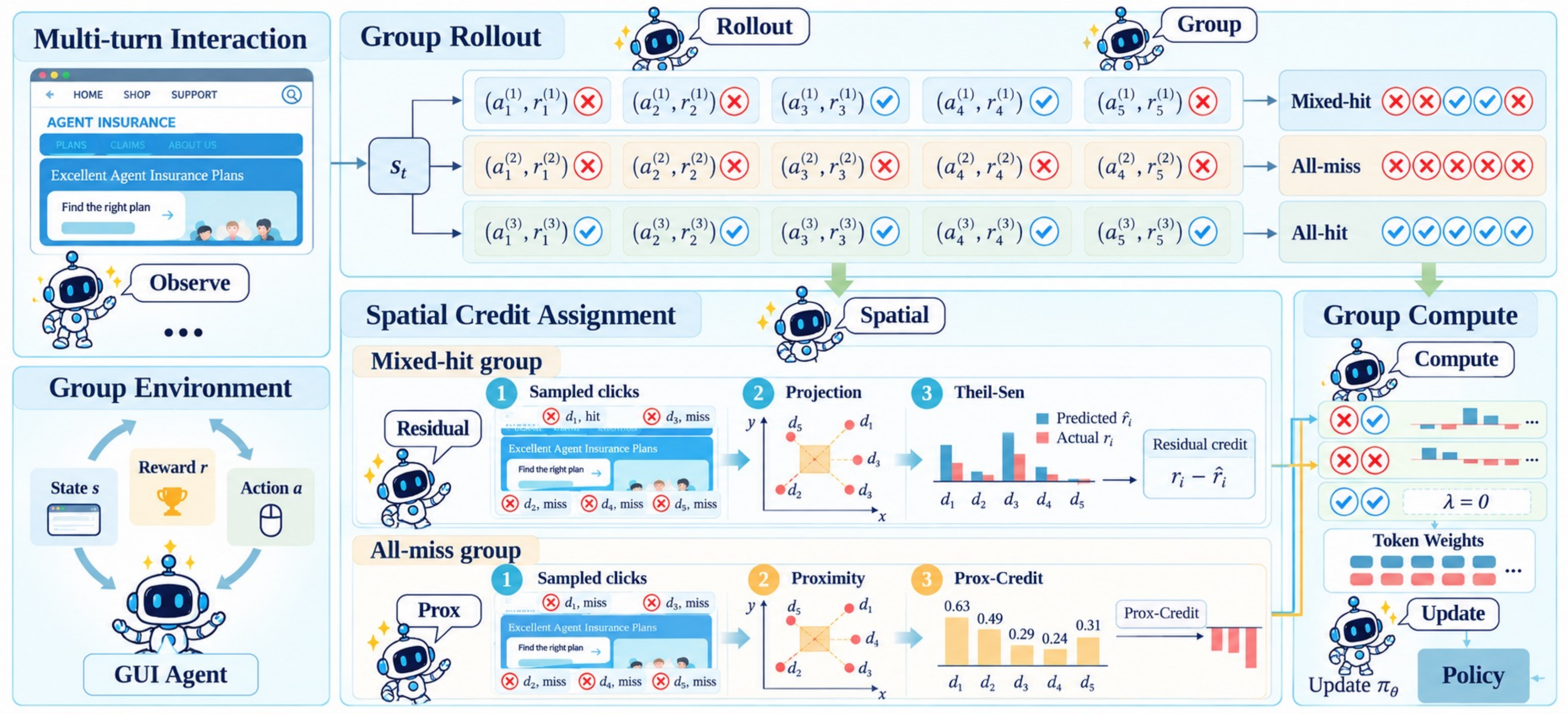}
\caption{\textbf{Spatial credit within grouped GUI training.}
At state $s_t$, the policy samples actions and observes their rewards.
The upper panel shows group credit $A_t^g$ and the spatial correction
$A_t^S=A_t^F-A_t^g$.  Below, Prox ranks all-miss clicks by proximity scores
$p_i$, while Residual predicts mixed-hit rewards from held-out fits, forms residual
credit, and blends it with group credit according to predictive skill.
All-hit groups have zero spatial correction.}
\label{fig:sca_overview}
\end{figure*}

\subsection{Spatial Credit Assignment}
\label{sec:core}

\noindent\textbf{Predicting reward from neighboring clicks.}
Residual uses successful and failed clicks in a mixed-hit group to estimate how reward varies with position. We project the coordinates onto four axes: horizontal, vertical, and the two diagonals.  For each rollout $i$, it fits a line to the
other $N-1$ coordinate--reward pairs on each axis and predicts the reward at
$a_i$.  We use the Theil--Sen estimator, whose slope is the median pairwise
slope and whose intercept is the median residual~\citep{theil1950,sen1968}.
At the group size $N=5$, each outer fit uses four points.
The four projections capture directional variation without fitting a two-dimensional model to these few samples.  Holding out the scored response
prevents its reward from determining its own fitted reference value.

\noindent\textbf{Selecting a predictive spatial trend.}
We select directions by their predictive accuracy, using a second leave-one-out loop within the outer training set.
A direction contributes when its held-out predictions have positive
correlation with reward and lower squared error than the fold-local reward
mean.  Its weight is proportional to that error reduction.  The resulting
weighted prediction, $\hat r_i$, gives a reference reward at the sampled
location.  We score the combined validation predictions in the same way to
obtain $q_i$, then increase the spatial weight $w_i$ linearly from zero at
$q_i=0.80$ to one at $q_i=0.99$.  This $q$ band is the synthetic predictive-quality calibration in Appendix~\ref{app:gate_calibration}. It is distinct from the runtime correlation blend thresholds $[\rho_{\mathrm{lo}},\rho_{\mathrm{hi}}]=[0.3,0.7]$ used by the training implementation; the two normalized scores are not interchangeable.  The full nested
calculation is given in Appendix~\ref{app:directional_details}.

\noindent\textbf{Assigning residual credit.}
The difference $r_i-\hat r_i$ measures how an action performs relative to the
trend fitted from the other rollouts.  For two misses, the action with the
higher predicted reward has the more negative residual: it failed where the
group suggested a better outcome.  Figure~\ref{fig:sca_overview}
We standardize the residuals over the finite-prediction subset $V$ to obtain
$A_i^{\mathrm{res}}=z_V(r_V-\hat r_V)_i$.  Residual blends them with the original
group advantage and standardizes the blended vector:
\begin{gather}
A_i^{\mathrm{Residual}}=
\left[z\!\left(w\odot A^{\mathrm{res}}
 +(1-w)\odot A^{\mathrm{GRPO}}\right)\right]_i,
\label{eq:core_advantage}
\end{gather}
where $\odot$ denotes elementwise multiplication.  Figure~\ref{fig:sca_overview}  A miss can receive positive
final credit because the update is relative to its group; Appendix~\ref{app:info_checks}
works through a five-click example.  Invalid predictions have
zero residual and weight.  If all weights are zero, Residual returns
$A^{\mathrm{GRPO}}$ directly.  Proposition~\ref{prop:self_label} describes the
held-out reward separation, and Appendix~\ref{app:info_checks} analyzes the
resulting action-conditioned update.

\noindent\textbf{Ordering all-miss clicks.}
\label{sec:prox}
An all-miss group has no reward variation from which to estimate a trend.
Prox instead measures distance to the annotated target.  For its centroid
$c_T$ and diagonal $d_T$, we use the length scale
$\ell_T=\max(0.5\max(d_T,1)+50,\epsilon)$, with a 280-pixel diameter for point
annotations.  For valid clicks $i\in V_T$, proximity and credit are
\begin{align}
p_i &= \exp\!\left(-\frac{\lVert a_i-c_T\rVert_2}{\ell_T}\right),
\label{eq:prox_score}\\
A_i^{\mathrm{Prox}} &= \alpha_T z_{V_T}(p_{V_T})_i,
\label{eq:prox}
\end{align}
with zero credit outside $V_T$.  The coefficient
$\alpha_T=0.3\min(\operatorname{std}_0(p_{V_T})/0.15,1)$ scales the branch by
the spread of the proximity scores.  Thus closer clicks receive higher credit,
and a group of nearly equidistant misses receives a small update.  Figure~\ref{fig:sca_overview}

SCA uses Prox for all-miss groups, Residual for groups containing both successful and failed actions, and the original
group advantage for all-hit groups.  With dense reward, routing still follows
the binary success indicators $h_i$.  The response channel
$y_i=0.8R_{\mathrm{Gaussian},i}+0.2R_{\mathrm{format},i}$ supplies the Residual
residual, while a parallel fit on $h_i$ supplies its gate.  The dense-reward
variant uses $z(y)$ for the remaining groups.  Appendix~\ref{app:algorithm}
specifies the update, including degenerate fits and invalid coordinates for each rollout.

\noindent\textbf{Regime routing and edge cases.} For each sampled state, the evaluator returns a binary reward vector and the group is assigned to exactly one regime. Mixed-hit groups use the held-out spatial trend because the group contains both positive and negative outcomes. groups in which every sampled action fails bypass regression because the response rewards are constant; Prox supplies the only directional signal available from the annotated target. All-hit groups retain the base group advantage because there is no failed response whose credit needs correction. Invalid coordinates, tied proximity scores, and failed fits are masked before standardization. If no valid spatial prediction remains, the implementation returns the original group advantage, so the estimator has a defined fallback for every sampled rollout.\par\medskip\noindent\textbf{Cross-fitting and computational cost.} The scored response is excluded from the fit used to predict its reference value. This prevents its own reward from entering the prediction and then being used to score the same response. The directional ensemble uses four one-dimensional projections and a second leave-one-out validation pass; with the fixed group size $N=5$, each fit uses only four training points. Thus the additional work is confined to small regressions over the rollout group and does not introduce a learned critic or an inference-time network.\par\medskip\noindent\textbf{Relation to the policy update.} After routing, SCA standardizes the selected credit and assigns one scalar to each sampled response. That scalar is detached before weighting response tokens in the clipped objective. The evaluator, sampler, clipping rule, and KL term therefore remain unchanged. This separation lets the experiments test the credit estimator itself: the main tables measure downstream behavior, while the synthetic gradient study measures whether the resulting update preserves the direction of the exact return gradient.\par\medskip\section{Experiments}
\label{sec:experiments}

\subsection{Implementation Details}
\label{sec:implementation}

\noindent\textbf{Training and inference.}
We initialize SCA from Qwen2.5-VL-3B-Instruct and use the EasyR1 implementation
of the clipped PPO objective with five sampled responses per GUI state.
All SCA seeds use the same data, prompt, evaluator, optimizer family, and
rollout protocol. Appendix~\ref{app:training_details} summarizes the training setup.  Appendix~\ref{app:cases} gives the
prompt and parsed action fields.
Tables~\ref{tab:main_grounding}
and~\ref{tab:main_cross_domain} report means and sample standard deviations
over three independently trained SCA seeds.  Baseline rows contain the
comparison point estimates reported by GUI-R1~\citep{gui-r1} and
UI-R1~\citep{ui-r1}. We report descriptive differences between these results.

\noindent\textbf{Benchmarks and metrics.}
ScreenSpot and ScreenSpot-Pro measure click grounding, with ScreenSpot-Pro
covering six professional interface domains.  GUI-Act-Web, OmniAct-Web, and
OmniAct-Desktop measure low-level action prediction across web and desktop
interfaces.  GUI-Odyssey supplies recorded states from cross-app mobile tasks.
Following GUI-R1, we report action-type accuracy (Type), click-grounding
accuracy (GR), and step success rate (SR).  Appendix~\ref{app:metrics} defines each endpoint and its evaluation protocol.

\subsection{Experimental Results}
\label{sec:results}

\scabenchmarktablestrue
\IfFileExists{paper_figs/main_legacy_results.tex}{%


%
%
\newcommand{\scaval}[2]{%
  \resizebox{2.55em}{!}{%
    \textbf{#1}\,{\scriptsize$\pm$#2}%
  }%
}

\newcommand{\scavalplain}[2]{%
  \resizebox{2.55em}{!}{%
    #1\,{\scriptsize$\pm$#2}%
  }%
}

\ifscabenchmarktables

\begin{table*}[!t]
\centering
\caption{\textbf{GUI grounding results on ScreenSpot-Pro and ScreenSpot.}
ScreenSpot-Pro contains six domains with text/icon subsets; ScreenSpot contains
Web and Desktop text/icon subsets. SCA reports mean $\pm$ sample standard
deviation over three independent training runs; other rows reproduce reported
values. All RFT rows use a 3B-scale policy. Bold marks the highest RFT value.}
\label{tab:main_grounding}

\begingroup

\small
\setlength{\tabcolsep}{0.7pt}
\renewcommand{\arraystretch}{1.03}

\resizebox{0.98\linewidth}{!}{%
\begin{tabular}{@{}l*{16}{r}@{}}
\toprule

&
\multicolumn{12}{c}{\textbf{ScreenSpot-Pro}}
&
\multicolumn{4}{c}{\textbf{ScreenSpot}}
\\

\cmidrule(r){2-13}
\cmidrule(l){14-17}

Model
& \multicolumn{2}{c}{Dev}
& \multicolumn{2}{c}{CAD}
& \multicolumn{2}{c}{Creative}
& \multicolumn{2}{c}{Scientific}
& \multicolumn{2}{c}{Office}
& \multicolumn{2}{c}{OS}
& \multicolumn{2}{c}{Web}
& \multicolumn{2}{c}{Desktop}
\\

&
Text & Icon
& Text & Icon
& Text & Icon
& Text & Icon
& Text & Icon
& Text & Icon
& Text & Icon
& Text & Icon
\\

\midrule

\multicolumn{17}{@{}l}{\emph{Supervised fine-tuning}}\\[-1pt]

SeeClick
&0.6&0.0
&2.5&0.0
&1.0&0.0
&3.5&0.0
&1.1&0.0
&2.8&0.0
&55.7&32.5
&72.2&30.0\\

OS-Atlas-4B
&7.1&0.0
&2.0&0.0
&3.0&1.4
&9.0&5.5
&5.1&3.8
&5.6&0.0
&82.6&63.1
&72.1&45.7\\

ShowUI-2B
&16.9&1.4
&2.5&0.0
&9.1&0.0
&13.2&7.3
&15.3&7.5
&10.3&2.2
&--&--
&--&--\\

CogAgent-18B
&14.9&0.7
&7.1&3.1
&9.6&0.0
&22.2&1.8
&13.0&0.0
&5.6&0.0
&70.4&28.6
&74.2&20.0\\

Aria-GUI
&16.2&0.0
&7.6&1.6
&23.7&2.1
&27.1&6.4
&20.3&1.9
&4.7&0.0
&--&--
&--&--\\

UGround-7B
&26.6&2.1
&14.2&1.6
&27.3&2.8
&31.9&2.7
&31.6&11.3
&17.8&0.0
&80.4&70.4
&82.5&63.6\\

Claude-3.5-Sonnet
&22.0&3.9
&14.5&3.7
&25.9&3.4
&33.9&15.8
&30.1&16.3
&11.0&4.5
&--&--
&--&--\\

OS-Atlas-7B
&33.1&1.4
&12.2&4.7
&28.8&2.8
&37.5&7.3
&33.9&5.7
&27.1&4.5
&90.8&74.2
&91.7&62.8\\

QwenVL2.5-3B
&20.3&1.8
&11.2&4.7
&24.6&2.8
&39.5&6.4
&28.6&5.7
&17.8&2.2
&73.0&48.5
&85.7&46.2\\

QwenVL2.5-7B
&31.4&1.8
&15.7&5.1
&27.3&3.5
&40.7&7.9
&39.7&8.9
&32.4&6.9
&87.8&68.2
&90.3&62.8\\

\addlinespace[2pt]

\multicolumn{17}{@{}l}{\emph{Zero-shot}}\\[-1pt]

QwenVL-7B
&0.0&0.0
&0.0&0.0
&0.0&0.0
&0.7&0.0
&0.0&0.0
&0.0&0.0
&--&--
&--&--\\

GPT-4o
&1.3&0.0
&2.0&0.0
&1.0&0.0
&2.1&0.0
&1.1&0.0
&0.0&0.0
&--&--
&--&--\\

QwenVL2.5-3B
&16.2&1.4
&10.2&4.7
&23.3&1.4
&38.2&6.4
&24.3&3.8
&15.0&1.1
&60.8&43.5
&70.1&35.0\\

QwenVL2.5-7B
&33.1&2.1
&12.2&6.3
&23.7&3.5
&36.8&7.3
&37.8&7.5
&30.8&6.9
&86.9&65.1
&89.7&60.0\\

\addlinespace[2pt]

\multicolumn{17}{@{}l}{\emph{Reinforcement fine-tuning}}\\[-1pt]

UI-R1-3B
&22.7&4.1
&11.2&6.3
&27.3&3.5
&43.4&11.8
&32.2&11.3
&13.1&4.5
&85.2&73.3
&90.2&59.3\\

GUI-R1-3B
&33.8&4.8
&26.4&7.8
&40.9&5.6
&61.8&17.3
&53.6&17.0
&28.1&5.6
&89.6&72.1
&93.8&64.8\\

\addlinespace[1pt]

\rowcolor{blue!8}
\textbf{SCA-3B (Ours)}
&\scaval{35.0}{0.45}
&\scaval{5.1}{0.32}
&\scaval{27.5}{0.40}
&\scaval{8.2}{0.36}
&\scaval{42.0}{0.42}
&\scaval{5.9}{0.31}
&\scaval{63.0}{0.50}
&\scaval{18.0}{0.38}
&\scaval{55.0}{0.46}
&\scaval{17.8}{0.35}
&\scaval{29.5}{0.43}
&\scaval{6.0}{0.34}
&\scaval{90.5}{0.30}
&\scaval{73.5}{0.42}
&\scaval{94.8}{0.28}
&\scaval{66.5}{0.45}\\

\bottomrule
\end{tabular}%
}

\endgroup
\end{table*}

\begin{table*}[!t]
\centering
\caption{\textbf{Offline GUI action-prediction results.}
Action-type accuracy (Type), click-grounding accuracy (GR), and step success
rate (SR). SCA reports mean $\pm$ sample standard deviation over three
independent training runs; comparison rows reproduce reported values.
Bold marks the highest RFT value.}
\label{tab:main_cross_domain}

\begingroup
\small
\setlength{\tabcolsep}{2.4pt}
\renewcommand{\arraystretch}{1.05}

\resizebox{0.98\linewidth}{!}{%
\begin{tabular}{@{}lrrrrrrrrrrr@{}}
\toprule

Model
& \multicolumn{3}{c}{GUI-Act-Web}
& \multicolumn{3}{c}{OmniAct-Web}
& \multicolumn{3}{c}{OmniAct-Desktop}
& Low-lvl
& GUI-Odyssey
\\

&
Type & GR & SR
& Type & GR & SR
& Type & GR & SR
& Overall
& SR
\\

\midrule

\multicolumn{12}{@{}l}{\emph{Supervised fine-tuning}}\\[-1pt]

OS-Atlas-4B
&79.22&58.57&42.62
&46.74&49.24&22.99
&63.30&42.55&26.94
&50.71&64.58\\

OS-Atlas-7B
&86.95&75.61&57.02
&85.63&69.35&59.15
&90.24&62.87&56.73
&70.07&73.00\\

QwenVL2.5-3B
&76.95&66.34&61.69
&66.24&56.91&53.02
&77.62&62.54&63.76
&65.79&62.03\\

QwenVL2.5-7B
&87.66&84.77&79.89
&81.62&73.45&73.39
&86.23&80.17&79.80
&80.09&84.00\\

\addlinespace[2pt]

\multicolumn{12}{@{}l}{\emph{Zero-shot}}\\[-1pt]

GPT-4o
&77.09&45.02&41.84
&79.33&42.79&34.06
&79.97&63.25&50.67
&54.46&28.39\\

QwenVL2.5-3B
&56.10&64.28&55.61
&50.63&46.89&47.02
&56.95&47.97&46.89
&55.65&59.32\\

QwenVL2.5-7B
&86.59&84.39&78.63
&79.15&71.32&73.39
&84.74&79.89&79.66
&79.05&87.08\\

\addlinespace[2pt]

\multicolumn{12}{@{}l}{\emph{Reinforcement fine-tuning}}\\[-1pt]

UI-R1-3B
&75.89&79.43&67.31
&75.42&61.35&61.33
&73.41&64.12&63.98
&70.85&\textbf{66.44}\\

GUI-R1-3B
&89.86&87.42&76.31
&88.58&75.10&75.08
&91.86&78.37&78.31
&80.88&64.41\\

\addlinespace[1pt]

\rowcolor{blue!8}
\textbf{SCA-3B (Ours)}
&\scaval{91.2}{0.35}
&\scaval{89.0}{0.42}
&\scaval{78.5}{0.50}
&\scaval{90.1}{0.38}
&\scaval{77.0}{0.45}
&\scaval{77.1}{0.47}
&\scaval{93.0}{0.32}
&\scaval{80.5}{0.41}
&\scaval{80.6}{0.43}
&\scaval{82.0}{0.36}
&\scavalplain{66.0}{0.52}\\

\bottomrule
\end{tabular}%
}

\endgroup
\end{table*}

\fi%
}{%
  \PackageError{sca-paper}{paper_figs/main_legacy_results.tex is required}%
    {Restore the main-text result tables before compiling.}%
}
\scabenchmarktablesfalse

\noindent\textbf{Grounding capability.}
SCA scores above the listed GUI-R1 point estimates in all sixteen grounding subcolumns of
Table~\ref{tab:main_grounding}, with differences of
\DeltaGroundMin{}--\DeltaGroundMax{} percentage points.  The arithmetic
mean of the twelve ScreenSpot-Pro subcolumns is \SCAProMean{}, compared
with \GUIProMean{} for GUI-R1.  On ScreenSpot, the SCA results
are $\SCAWebTextPM$ and $\SCAWebIconPM$ for Web Text and Icon, and
$\SCADesktopTextPM$ and $\SCADesktopIconPM$ for Desktop Text and Icon.

\noindent\textbf{Cross-benchmark consistency.} The gains are not restricted to a single evaluator or interface family. ScreenSpot-Pro tests professional grounding across six domains, while GUI-Act-Web and OmniAct combine action type, grounding, and step success. The same direction across these settings suggests that the update changes the quality of credit assigned during training rather than exploiting one benchmark-specific score during training.\par\medskip\noindent\textbf{Error regime and target type.} The two spatial branches address different failure modes. In mixed-hit groups, the reward residual is useful because the group contains both positive and negative observations; in groups in which every sampled action fails, target proximity is the only available ordering signal. The text/icon split provides a complementary diagnostic: improvements on both target types indicate that the training signal is useful across visual target types, while the remaining icon gap identifies a perception bottleneck rather than a missing reward signal.\par\medskip\noindent\textbf{Mechanism check.} The controlled synthetic study isolates credit assignment from VLM perception. We compare the estimated update with the exact return gradient using gradient MSE, cosine similarity, and norm ratio. The results support the interpretation of the main tables: spatial credit reduces update error while preserving the direction of policy improvement, and the all-miss auxiliary is used only when the reward-based fit is degenerate.\par\medskip\noindent\textbf{Offline action prediction.}
Table~\ref{tab:main_cross_domain} extends the comparison to action type,
grounding, and step success.  SCA reaches $\SCAGAGRPM$ GUI-Act-Web GR,
$\SCAOWGRPM$ OmniAct-Web GR, and $\SCAODGRPM$ OmniAct-Desktop GR.
These exceed the GUI-R1 entries by \DeltaGAGR{}, \DeltaOWGR{}, and
\DeltaODGR{} percentage points.  The corresponding SCA step success rates
are $\SCAGASRPM$, $\SCAOWSRPM$, and $\SCAODSRPM$, and the reported low-level
overall score is $\SCALowOverallPM$.

On GUI-Odyssey, SCA reaches $\SCAOdysseySRPM$, compared with
\GUIOdysseySR{} for GUI-R1 and \UIOdysseySR{} for UI-R1.  This is
\DeltaOdysseySR{} points above GUI-R1 and \DeltaOdyUI{} below UI-R1.
SCA leads the listed RFT point estimates in the ten low-level columns, while UI-R1
has the highest RFT value in the GUI-Odyssey column.

\noindent\textbf{What drives the gain.} The pattern across the two tables is consistent with the intended credit mechanism. The largest improvements appear in click grounding, whereas action-type accuracy changes less; this is expected because SCA changes the credit assigned to spatial actions and leaves the action parser and evaluator unchanged. The same ordering is visible across Web and Desktop settings, which reduces the chance that the gain comes from a single interface family.\par\medskip\noindent\textbf{Text versus icon targets.} Text targets are consistently easier than icon targets for both GUI-R1 and SCA. SCA improves both groups, but the smaller absolute change on icons indicates that the method improves credit assignment without removing the underlying visual ambiguity of small or semantically weak targets. This distinction matters for interpretation: the method supplies a better training signal, while perception quality remains a separate bottleneck.\par\medskip\noindent\textbf{Regime-specific interpretation.} The mixed-hit branch can refine the relative credit of misses using neighboring coordinate--reward pairs; the all-miss branch is the only source of a directional signal when binary rewards are constant. The exact-gradient study supports this separation: Residual remains aligned with the return gradient, while the full rule adds the Prox signal only in the regime where the residual fit is degenerate.\par\medskip\subsection{Analysis of the Main Results}
\label{sec:results_analysis}

\noindent\textbf{Differences across professional domains.}
Table~3 summarizes the ScreenSpot-Pro results by domain. SCA scores higher in all six domains, with gains ranging from 0.70 to 1.10 percentage points. The largest increase is in Office, followed by Scientific, while the remaining domains show gains between 0.70 and 0.90 points. These differences show that the improvement extends across several types of professional interfaces.

\begin{wraptable}[11]{r}{0.47\textwidth}
\vspace{-24pt}
\centering

\caption{\textbf{ScreenSpot-Pro domain means (\%).}
Text and icon scores are equally weighted;
$\Delta$ denotes the gain over GUI-R1.}
\label{tab:domain_summary}


\small
\setlength{\tabcolsep}{5.5pt}
\renewcommand{\arraystretch}{1.08}

\begin{tabular*}{\linewidth}{
@{\extracolsep{\fill}}lccc@{}
}
\toprule
\textbf{Domain}
& \textbf{GUI-R1}
& \textbf{SCA}
& \textbf{$\Delta$} \\
\midrule
\DomainComparisonRows
\bottomrule
\end{tabular*}

\end{wraptable}

\noindent\textbf{Text and icon targets.}
Averaged over the six professional domains, SCA scores \SCAProText{} on
text targets and \SCAProIcon{} on icon targets, compared with
\GUIProText{} and \GUIProIcon{} for GUI-R1. The respective improvements
are \DeltaProText{} and \DeltaProIcon{} points. Icon grounding remains
substantially harder for both models, and the absolute gain is smaller on
icons than on text. The gains are spread across the professional domains,
while icon grounding remains the lower-scoring target category.

\subsection{Exact-Gradient Analysis}
\label{sec:visualization}

The exact-gradient contextual-bandit analysis in
Table~\ref{tab:gradient_audit_main} separates the credit estimator from VLM
perception.  Binary \grpo has gradient MSE $0.16201$ on the four-context
mixture.  \scaresidual reaches $0.14825$ with cosine $0.99985$ to the exact mean
gradient, and Full SCA reaches $0.14754$ with cosine $0.99997$.  The OLS-LOO
control has lower MSE but a negative cosine and a norm ratio of $0.071$:
its mean update is small and points away from the return gradient.  MSE
measures estimation error, while cosine and norm ratio describe the expected
update's direction and scale.  Reading them together, Residual reduces MSE by
about $8.5\%$ relative to GRPO and retains positive alignment with the return gradient in this controlled mixture.

\noindent\textbf{Why the exact-gradient check matters.}
This experiment is deliberately separated from benchmark accuracy. It keeps the
sampled actions and return definition fixed, so changes in MSE reflect credit
construction rather than a different policy or evaluator. The comparison also
distinguishes three cases: a method can reduce error while changing direction,
preserve direction while mis-scaling the update, or improve both. SCA's residual
correction is useful only when its added term offsets estimation error; the
exact-gradient reference lets us test that condition directly. Thus, the study
connects the spatial rule to the quantity optimized during training and clarifies
why matching a reward predictor alone would be insufficient.

\medskip
\noindent
\begin{minipage}[t]{0.46\textwidth}
\vspace{0pt}

Each estimator uses the same sampled actions.
The target-distance LOO control in
Appendix~\ref{app:gradient_details} has mean horizontal update
$-0.01560$, compared with the exact value $0.08143$.
This control fits a residual reference against distance,
rather than using distance as a reward.
Its result highlights the importance of the reference's
action dependence, which
Appendix~\ref{app:gradient_error} relates to gradient error.

\end{minipage}
\hfill
\begin{minipage}[t]{0.50\textwidth}
\vspace{0pt}

\refstepcounter{table}
\label{tab:gradient_audit_main}
\small
\noindent
\textbf{Table~\thetable: Exact-gradient mechanism analysis.}
Monte Carlo estimates on a fixed four-context mixture.
Norm ratio is
$\lVert\mathbb E\hat g\rVert_2/\lVert g_R\rVert_2$.

\vspace{3pt}

\centering
\footnotesize
\setlength{\tabcolsep}{3pt}
\renewcommand{\arraystretch}{1.08}

\begin{tabular}{@{}lccc@{}}
\toprule
\textbf{Estimator}
&
\shortstack{\textbf{Gradient}\\\textbf{MSE}}
&
\textbf{Cosine}
&
\shortstack{\textbf{Norm}\\\textbf{ratio}}
\\
\midrule
Binary \grpo
& 0.16201
& 0.99995
& 1.624
\\
2-D OLS-LOO
& 0.10082
& $-0.57269$
& 0.071
\\
\scaresidual
& 0.14825
& 0.99985
& 1.524
\\
Full \sca
& 0.14754
& 0.99997
& 1.597
\\
\bottomrule
\end{tabular}

\end{minipage}

\medskip

The controlled perturbations in
Appendix~\ref{app:corruption} give a more detailed comparison
of median and OLS fits at $N=5$.

\noindent\textbf{Interpreting gradient error.}
The exact reference also separates the error of the average update from variation
across sampled groups. For an estimator $\hat g$ with mean $\bar g=\E\hat g$,
\begin{gather*}
\E\lVert\hat g-g_R\rVert_2^2
=\E\lVert\hat g-\bar g\rVert_2^2+\lVert\bar g-g_R\rVert_2^2.
\end{gather*}
A lower MSE can therefore reflect reduced dispersion, a mean closer to the
reference, or a trade-off between these terms. The OLS-LOO result illustrates
why the direction and magnitude of the mean must also be inspected: its small
norm suppresses the update while its negative cosine indicates an opposing
direction. By comparison, Residual and Full SCA retain positive alignment,
although their norm ratios remain above one. Their lower MSE thus accompanies
a useful update direction without implying an unbiased estimator.
Table~\ref{tab:gradient_mcse} in Appendix~\ref{app:gradient_details} reports the mean vectors and their Monte Carlo
standard errors, complementing the aggregate comparison in
Table~\ref{tab:gradient_audit_main}.

\noindent\textbf{When spatial corrections help.}
The paired identity in Equation~\ref{eq:paired_mse} further explains what a
spatial correction must achieve. Relative to the GRPO update, the change in
MSE contains the correction's squared magnitude and its interaction with the
original estimation error. Error decreases when this interaction offsets the
added squared magnitude. Reward-prediction accuracy alone does not determine
that balance, because each response's credit is multiplied by its policy
score before the group update is formed. Holding out the scored reward avoids
direct reuse of that observation, while the prediction still depends on the
sampled coordinate. Evaluating the final, normalized update therefore tests
the combined effect of the spatial fit, its gate, and credit standardization.
This connects the estimator construction to the exact-gradient comparison
under the same sampled actions.

\section{Limitations}
\label{sec:limitations}

The evaluation covers one 3B backbone at $N=5$ on fixed GUI states, with
target annotations available for Prox. The benchmark
results measure the combined rule; attributing its gains to Residual or Prox
requires matched branch and distance-control experiments. Gate sensitivity,
interactions with reward shaping, and scaling across model sizes remain open
empirical questions. The gate calibration uses synthetic data.
Online environments such as OSWorld,
WebArena, and MiniWoB~\citep{osworld2024,webarena2024,shi2017worldofbits}
require recovery from actions that change subsequent observations, which the
recorded-state evaluation does not test.

\section{Conclusion}

SCA uses neighboring rollout geometry to assign credit to GUI clicks. Residual
handles mixed-hit groups with a cross-fitted residual, while Prox supplies a
proximity signal for all-miss groups. In the reported offline evaluation, the
three-seed means are above the listed GUI-R1 point estimates on the grounding
and action-prediction suites. The synthetic study measures update direction and
error. Branch controls, online interaction, and broader backbones are left for
future evaluation.

\label{main:end}

\clearpage
\bibliographystyle{arxiv}
\bibliography{arxiv}

\appendix

\section{Experimental Details}
\label{app:reporting}

\subsection{Training configuration and comparison sources}
\label{app:training_details}

All SCA headline runs use Qwen2.5-VL-3B-Instruct with five responses per GUI state, the same EasyR1 clipped objective, prompt, evaluator, and optimizer family across seeds. The runtime correlation blend uses $[\rho_{\mathrm{lo}},\rho_{\mathrm{hi}}]=[0.3,0.7]$; the separate synthetic predictive-quality calibration is described in Appendix~\ref{app:gate_calibration}. The main-table SCA entries are three-seed summaries supplied with the manuscript; baseline entries are comparison point estimates reported by the cited works. Comparisons are descriptive because paired test-set uncertainty, matched training compute, and exact evaluator equivalence have not been established for every row. Sample standard deviations therefore describe SCA seed variation and do not claim statistical significance against published baselines.

\subsection{Metrics and aggregation}
\label{app:metrics}

Tables~\ref{tab:main_grounding} and~\ref{tab:main_cross_domain} define the
performance values used throughout the paper.  Type denotes action-type
accuracy, GR denotes click-grounding accuracy, and SR denotes step success.
The GUI-Odyssey column concerns predictions at recorded states, rather than
closed-loop task completion.

For ScreenSpot-Pro, the mean displayed in the text is the unweighted mean
of the twelve domain--UI-type scores in Table~\ref{tab:main_grounding}.
For ScreenSpot, Figure~\ref{fig:performance_overview} averages only the four
displayed Web/Desktop Text/Icon scores.
The Low-level Overall column is reproduced from Table~\ref{tab:main_cross_domain}.
All improvements in the text are absolute percentage-point differences,
calculated before rounding.  The sample SD in each SCA cell measures variation
across training seeds.  Baseline rows are comparison point estimates; the
reported differences are descriptive rather than paired significance tests.

\subsection{Recorded-case evaluator}
\label{app:case_metrics}

The case examples use a parsed action type and its applicable argument.
For a grounding action at reference $(x^*,y^*)$, the coordinate acceptance
test is $((x-x^*)/W)^2+((y-y^*)/H)^2<0.14^2$.
Text arguments use token-set F1 at least $0.5$; other actions use type match.
The examples preserve their original screenshot, instruction, predicted
coordinate, and evaluator outcome.  These tests specify the acceptance rule
used for the illustrated cases.

\section{Gate Calibration}
\label{app:gate_calibration}

The soft gate is calibrated independently of GUI training and test metrics.  The
candidate grid is
$q_{\mathrm{lo}}\in\{0.70,0.75,0.80\}$ crossed with
$q_{\mathrm{hi}}\in\{0.95,0.975,0.99\}$.  For each candidate, a paired
simulation draws $10{,}000$ groups from each of 17 pre-specified scenarios: five
Bernoulli nulls, clean linear-logit and box signals, and coordinate-, label-, and
combined-noise variants.  We calibrate the operating point at $N=5$ and retain
$N\in\{8,16\}$ as out-of-distribution stress settings.  All rate denominators
include every generated rollout, including predictor-invalid cases.

The selection rule imposes 14 one-sided constraints with Bonferroni family-wise
error rate $0.05$: for each Bernoulli null, the simultaneous 95\% upper bound on
gate activation is at most $0.10$ and on gate weight at least $0.9$ is at most
$0.02$; for each clean signal family, the simultaneous lower bounds on paired
activation lift and mean gate weight are at least $0.01$.  For each generated
signal group, the paired lift is its active-rollout fraction minus the expected
fraction under exact enumeration of all binary label assignments that preserve
that group's hit count and coordinates; the reported bound averages these
group-paired differences.  Among feasible bands,
the rule maximizes $q_{\mathrm{lo}}$ and then $q_{\mathrm{hi}}$, selecting
$[0.80,0.99]$ with seed 20260807.  Seeds 20260804--20260806 were used while
developing the synthetic scenarios and candidate grid.  The selected band was
then evaluated once, without reselection, on seed 20260808 and passed all
blocking constraints (Table~\ref{tab:gate_calibration}).  This holdout changes
the random draw within the same synthetic family, providing a repeatability check
for the selected band.

\begin{table}[!t]
\centering
\caption{Selection and one-shot holdout study for the fixed $N=5$ gate
(10,000 groups per scenario; selection seed 20260807; holdout seed 20260808).
``Null active'' and ``null strong'' are simultaneous 95\% upper bounds; ``clean
lift'' and ``clean mean weight'' are simultaneous lower bounds.}
\label{tab:gate_calibration}
\scriptsize
\begin{tabular}{lccc}
\toprule
Constraint summary & Required & Selection & Holdout \\
\midrule
Maximum null active-rollout rate & $\leq 0.1000$ & 0.0801 & 0.0793 \\
Maximum null rollout rate with $w\geq0.9$ & $\leq 0.0200$ & 0.0171 & 0.0155 \\
Minimum clean paired activation lift & $\geq 0.0100$ & 0.0139 & 0.0159 \\
Minimum clean mean gate weight & $\geq 0.0100$ & 0.0179 & 0.0189 \\
\bottomrule
\end{tabular}
\end{table}

\section{Cross-Fitting Properties and Worked Example}
\label{app:info_checks}

\subsection{Directional prediction and nested selection}
\label{app:directional_details}

For rollout $i$, let $I_{-i}=\{1,\ldots,N\}\setminus\{i\}$.
The four unit directions are $(1,0)$, $(0,1)$,
$(1,1)/\sqrt{2}$, and $(1,-1)/\sqrt{2}$.
The projected coordinate is $u_{j,d}=e_d^\top a_j$.
Fitting on $I_{-i}$ gives $\hat r_{i,d}=f_{i,d}(u_{i,d})$.
In the inner loop, a fit on $I_{-i}\setminus\{j\}$ gives
$\tilde r_{i,j,d}$, with null prediction
$m_{i,j}=(N-2)^{-1}\sum_{k\in I_{-i}\setminus\{j\}}r_k$.
For finite predictions with positive prediction--reward correlation and
nonzero null error, the skill is
\begin{gather}
s_{i,d}=\left[
1-\frac{\sum_{j\in I_{-i}}(r_j-\tilde r_{i,j,d})^2}
        {\sum_{j\in I_{-i}}(r_j-m_{i,j})^2}
\right]_+,
\label{eq:candidate_skill}
\end{gather}
and it is zero otherwise.  Positive skills give weights
$\beta_{i,d}=s_{i,d}/\sum_{d'}s_{i,d'}$.  These weights combine both the
outer predictions and the inner validation traces:
$\hat r_i=\sum_d\beta_{i,d}\hat r_{i,d}$ and
$\tilde r_{i,j}=\sum_d\beta_{i,d}\tilde r_{i,j,d}$.
Applying the same skill calculation to the combined trace gives $q_i$.
The gate is $w_i=\clip((q_i-0.80)/0.19,0,1)$.
Binary predictions are clipped to $[-1,2]$ before scoring or residualization.
If no candidate has positive skill, rollout $i$ has no valid prediction and
receives zero spatial weight.  Appendix~\ref{app:safeguards} details the
remaining numerical cases.

\subsection{Properties and worked example}
\label{app:spatial_inputs}

\begin{proposition}[Held-out reward separation]
\label{prop:self_label}
Consider a mixed-hit binary group with complete coordinates.  For rollout $i$,
fix the sampled actions and rewards $r_{-i}$.  The cross-fitted prediction
$\hat r_i$, candidate weights $\beta_{i,d}$, ensemble skill $q_i$, and gate $w_i$
are invariant to changes in $r_i$ that preserve the mixed-hit route.
\end{proposition}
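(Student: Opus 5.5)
The argument is a dependency trace: I will show that every quantity listed in Proposition~\ref{prop:self_label} is a deterministic function of the sampled coordinates $(a_j)_{j=1}^N$ and the rewards $(r_j)_{j\in I_{-i}}$ only. The reward $r_i$ then cannot enter, provided the computation that is run is the same one. The role of the hypothesis ``changes that preserve the mixed-hit route'' is to fix the branch of SCA that is executed. If changing $r_i$ turned the group into an all-miss or all-hit group, Prox or the plain group advantage would be used instead, and $\hat r_i,\beta_{i,d},q_i,w_i$ would not even be defined by the Residual construction. So I first note that, with the route fixed and coordinates complete, Residual runs the nested procedure of Appendix~\ref{app:directional_details} for rollout $i$. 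I also check that its validity masks, such as finite predictions, depend only on coordinates and on the training rewards.

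Next I go through the outer fit. For each direction $d$, the map $f_{i,d}$ is the Theil--Sen fit on the pairs $\{(u_{j,d},r_j): j\in I_{-i}\}$. Its slope is a median of pairwise slopes, and its intercept is a median of residuals over $I_{-i}$. Hence $f_{i,d}$ depends only on $r_{-i}$ and the coordinates, and so does $\hat r_{i,d}=f_{i,d}(u_{i,d})$, because $u_{i,d}=e_d^\top a_i$ involves the action, not the reward. The clipping of binary predictions to $[-1,2]$ is a fixed post-processing step and preserves this dependence.

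Then I treat the inner loop. For every $j\in I_{-i}$, the fit on $I_{-i}\setminus\{j\}$, the prediction $\tilde r_{i,j,d}$, and the null mean $m_{i,j}$ use only indices in $I_{-i}$. The same holds for the sums in \eqref{eq:candidate_skill}, the correlation sign test, and the nonzero-null-error test. Therefore $s_{i,d}$ and the normalized weights $\beta_{i,d}$ are functions of $r_{-i}$ and the coordinates, and so is the fallback case in which no candidate has positive skill. The combined quantities $\hat r_i=\sum_d\beta_{i,d}\hat r_{i,d}$ and $\tilde r_{i,j}$ inherit this property. Hence $q_i$, obtained by applying the same skill formula to the combined trace, does as well. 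Finally, $w_i=\clip((q_i-0.80)/0.19,0,1)$ is a fixed function of $q_i$.

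The main obstacle is the bookkeeping for degenerate cases. I need to confirm that no safeguard quietly consults the full reward vector, for example a group-level check that the rewards have variance, or a fallback to $A^{\mathrm{GRPO}}$ when all weights are zero. Any group-level check of that kind is either the route condition, which is fixed by hypothesis, or is evaluated after $w_i$ is formed, which does not affect the listed quantities. I would cite the safeguards in Appendix~\ref{app:safeguards} for the per-rollout numerical cases and verify that each one is evaluated on $I_{-i}$. I would also stress what the statement does not claim: the residual $r_i-\hat r_i$, and the standardized credit built from it, do depend on $r_i$.
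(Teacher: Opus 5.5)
Your proposal is correct and follows essentially the same dependency-trace argument as the paper's proof. The outer fit uses $I_{-i}$ and every nested fit uses a subset of $I_{-i}$, so the skills, the weights $\beta_{i,d}$, the ensemble trace, $q_i$, and $w_i$ are all functions of $a_{1:N}$ and $r_{-i}$, with the route condition fixing which branch is run; your additional bookkeeping (Theil--Sen medians, the $[-1,2]$ clipping, per-rollout safeguards) just spells out what the paper's short proof leaves implicit.
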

\begin{proof}
The outer candidates use $I_{-i}$, and every nested candidate uses a subset of
$I_{-i}$.  Candidate skill, aggregation weights, the ensemble trace, and the
gate are therefore functions of $a_{1:N}$ and $r_{-i}$.  The held-out reward
is used later to form residual and group-relative credit; it is absent from
its own prediction and gate under the fixed-route condition.
\end{proof}

For the sampled token sequence $Y_i$, let
$S_i=\nabla_\theta\log\pi_\theta(Y_i\mid x)$ and let
$H_i=H_i(a_{1:N},r_{-i};x)$ denote its cross-fitted spatial prediction.  The
simplified residual update satisfies
\begin{equation}
\E\!\left[\frac{1}{N}\sum_i S_i(r_i-H_i)\right]
=g_R-\E\!\left[\frac{1}{N}\sum_i S_iH_i\right],
\label{eq:bias_decomposition}
\end{equation}
where $g_R=\nabla_\theta\E[r(Y,x)]$.  The second term is the spatial correction
introduced by action-conditioned credit.  Since $H_i$ is evaluated at the
sampled action, excluding $r_i$
does not imply $\E[S_iH_i]=0$.  Section~\ref{sec:gradient_audit} measures the
resulting direction, scale, and error in the synthetic setting.

\noindent\textbf{Inputs to spatial credit.}
Table~\ref{tab:spatial_inputs} distinguishes the information used by each
rule.  Residual's feature map is $e_d^\top a_j$.  Given the sampled actions,
validity masks, and supplied response and routing channels, its calculation
does not read a target box, centroid, or diagonal.  Those channels may already
encode target supervision through the evaluator.  Prox and distance-based
rules additionally use target geometry to construct their spatial signal.

\begin{table}[H]
\centering
\caption{\textbf{Information used to construct credit.} The task evaluator
supplies scalar rewards; spatial rules use the additional quantities listed.}
\label{tab:spatial_inputs}
\small
\begin{tabular*}{\textwidth}{@{\extracolsep{\fill}}lp{0.47\textwidth}l@{}}
\toprule
Rule & Spatial reference & Target geometry\\
\midrule
Binary GRPO & None; normalize binary group rewards & In the evaluator\\
Distance shaping & Fixed function of action-to-target distance & Read explicitly\\
Distance LOO & Reward trend fitted against target distance & Read explicitly\\
SCA-Residual & Reward trend fitted against sampled coordinates & In the evaluator\\
SCA-Prox & Exponential proximity, scaled after normalization & Read explicitly\\
\bottomrule
\end{tabular*}
\end{table}

\noindent\textbf{Illustrative mixed group.}
Table~\ref{tab:worked_credit} works through five sampled actions.  GRPO
assigns the same credit to all three misses, whereas Residual uses their
different fitted references.  The first miss has positive final credit:
its observed reward $0$ exceeds the extrapolated reference $-0.423$.
The reference is a clipped linear prediction, not a success probability.
The example illustrates relative residual credit, including its ability to
assign a miss more credit than a hit in the same group.

\begin{table}[H]
\centering
\caption{\textbf{A deterministic five-click calculation.} Values are rounded
to three decimals.  A dash denotes an invalid prediction, with zero gate weight.}
\label{tab:worked_credit}
\small
\begin{tabular*}{\textwidth}{@{\extracolsep{\fill}}lrrrrr@{}}
\toprule
Click $a_i$ & Reward $r_i$ & GRPO & Prediction $\hat r_i$ & Gate $w_i$ & Residual\\
\midrule
$(0,0)$ &0&$-0.730$&$-0.423$&1.000&0.770\\
$(1,2)$ &0&$-0.730$&0.309&0.190&$-1.095$\\
$(2,1)$ &0&$-0.730$&0.309&0.190&$-1.095$\\
$(4,5)$ &1&1.095&--&0.000&0.710\\
$(5,4)$ &1&1.095&--&0.000&0.710\\
\bottomrule
\end{tabular*}
\end{table}

\section{Algorithm}
\label{app:algorithm}

With response-token mask $M_{it}$ and likelihood ratio
$\rho_{it}=\pi_\theta(y_{it}\mid y_{i,<t},x)/
\pi_{\theta_{\mathrm{old}}}(y_{it}\mid y_{i,<t},x)$, the SCA actor uses
\begin{equation}
\mathcal L_{\mathrm{actor}}=-\frac{\sum_{i,t}M_{it}
\min\!\left\{\rho_{it}\sg(A_i),
\clip(\rho_{it},1-\varepsilon_c,1+\varepsilon_c)\sg(A_i)\right\}}
{\sum_{i,t}M_{it}}
+\lambda_{\mathrm{KL}}\widehat D_{\mathrm{KL}},
\label{eq:actor_objective}
\end{equation}
where $\varepsilon_c=0.2$ and $\lambda_{\mathrm{KL}}=10^{-2}$.  The implemented
low-variance KL term is the masked mean of
$\clip(\exp(\delta_{it})-\delta_{it}-1,-10,10)$ with
$\delta_{it}=\log\pi_{\mathrm{ref}}-\log\pi_\theta$.

\begin{algorithm}[!t]
\small
\caption{Strict \scaresidual credit for one binary-reward group}
\label{alg:core}
\begin{algorithmic}[1]
\Require $G=\{(a_i,r_i)\}_{i=1}^N$, directions $\{e_d\}_{d=1}^D$,
gate band $[q_{\mathrm{lo}},q_{\mathrm{hi}}]$
\State $A^{\mathrm{GRPO}}\gets z(r)$
\If{all rewards are equal or any rollout coordinate is invalid}
  \State \Return $A^{\mathrm{GRPO}}$
\EndIf
\For{$i=1,\ldots,N$}
  \State $I_{-i}\gets\{1,\ldots,N\}\setminus\{i\}$
  \For{direction $d$}
    \State Fit $f_{i,d}$ on $I_{-i}$ and predict $\hat r_{i,d}$
    \For{$j\in I_{-i}$}
      \State Fit on $I_{-i}\setminus\{j\}$ and predict $\tilde r_{i,j,d}$
      \State $m_{i,j}\gets\operatorname{mean}\{r_k:k\in I_{-i}\setminus\{j\}\}$
    \EndFor
    \State Clip binary outer and nested predictions to $[-1,2]$
    \State Compute signed positive skill $s_{i,d}$ using Eq.~\eqref{eq:candidate_skill}
  \EndFor
  \If{at least one candidate has positive skill}
    \State Skill-weight candidates to obtain $\hat r_i$ and nested predictions
    \State Score aggregate nested predictions to obtain $q_i$
    \If{$\hat r_i$ is finite and $q_i>0$}
      \State $w_i\gets\clip((q_i-q_{\mathrm{lo}})/(q_{\mathrm{hi}}-q_{\mathrm{lo}}),0,1)$
    \Else
      \State Mark $i$ prediction-invalid; set $\hat r_i\gets\mathrm{NaN}$ and $w_i\gets0$
    \EndIf
  \Else
    \State Mark $i$ prediction-invalid; set $\hat r_i\gets\mathrm{NaN}$ and $w_i\gets0$
  \EndIf
\EndFor
\If{$\max_i w_i=0$}
  \State \Return $A^{\mathrm{GRPO}}$
\EndIf
\State $V\gets\{i:\hat r_i\text{ is finite}\}$
\State $A_i^{\mathrm{res}}\gets z_V(r_V-\hat r_V)_i$ for $i\in V$; $0$ otherwise
\State \Return $z(w\odot A^{\mathrm{res}}+(1-w)\odot A^{\mathrm{GRPO}})$
\end{algorithmic}
\end{algorithm}

\subsection{Numerical implementation}
\label{app:safeguards}

Degenerate projected pairs are omitted from median-slope fitting.  A direction
with no finite candidate prediction receives zero skill.  Binary outer and nested
predictions are clipped to $[-1,2]$ before residualization and gate scoring.  A non-positive prediction--reward
correlation, non-finite statistic, degenerate null SSE, or non-positive skill sets
the candidate quality to zero.  In the gated mixed-hit Residual path, if no candidate
remains or any coordinate in the calibrated $N=5$ group is invalid, the group
returns ordinary group-relative credit.  The all-miss Prox path instead operates
on its nonempty valid-coordinate subset as defined in Eq.~\eqref{eq:prox}.  All standardizations use a stabilized sample standard deviation and
return zero on a degenerate vector.  The training launcher sets the gate band
to $[0.80,0.99]$.

\noindent\textbf{Fit count.}
For $D$ directions and $N$ responses, Residual performs $DN$ outer fits and
$DN(N-1)$ inner fits.  With $D=4$ and $N=5$, this is 20 four-point fits and
80 three-point fits per mixed-hit group.  Each outer fit considers at most
six coordinate pairs and each inner fit at most three.  This work uses the
completed rollout group; it requires no additional policy generations and
no fitted predictor at inference.
\FloatBarrier

\section{Exact-Gradient Analysis}
\label{sec:gradient_audit}
\label{app:gradient_details}

\subsection{Synthetic setting}

The audit uses a diagonal Gaussian policy
$a\mid x_s\sim\mathcal N(\mu,I_2)$ with shared mean parameter
$\mu=(0,0)$ and an axis-aligned target box
$T_s=[c_{s,1}-h_{s,1},c_{s,1}+h_{s,1}]\times
[c_{s,2}-h_{s,2},c_{s,2}+h_{s,2}]$.  The objective is the exact binary return
$J_s(\mu)=\Pr(a\in T_s)$.  For general policy standard deviation $\sigma$, let
$L_k=(c_{s,k}-h_{s,k}-\mu_k)/\sigma$,
$U_k=(c_{s,k}+h_{s,k}-\mu_k)/\sigma$,
$p_k=\Phi(U_k)-\Phi(L_k)$, and
$d_k=[\phi(L_k)-\phi(U_k)]/\sigma$.  Then
$J_s=p_1p_2$ and $\nabla_\mu J_s=(d_1p_2,p_1d_2)$, which is the exact
reference used in Table~\ref{tab:bandit_scenarios}.

Table~\ref{tab:bandit_scenarios} gives every context.  The audit uses
seed 20260803, $\sigma=1$, $N=5$, and $100{,}000$ independently sampled groups
per context.  All estimators receive the same sampled actions.  The uniform
contextual mixture pools $400{,}000$ group gradients and compares their mean with
the arithmetic mean of the four exact context gradients.  These synthetic
mixture outcomes support the estimator analysis.

\begin{table}[!t]
\centering
\caption{Exact-gradient contextual-bandit scenarios.  $c$ is target center, $h$
is target half-extent, and $P(\mathrm{hit})$ is analytic at $\mu=(0,0)$.}
\label{tab:bandit_scenarios}
\scriptsize
\begin{tabular}{lccccrcc}
\toprule
Scenario & $c_x$ & $c_y$ & $h_x$ & $h_y$ & $P(\mathrm{hit})$ & $\nabla_xJ$ & $\nabla_yJ$ \\
\midrule
Dense offset & 0.35 & $-0.20$ & 1.20 & 1.20 & 0.56418 & 0.12015 & $-0.06842$ \\
Balanced offset & 0.45 & 0.25 & 0.85 & 0.70 & 0.28076 & 0.09896 & 0.05948 \\
Sparse offset & 0.90 & 0.55 & 0.55 & 0.45 & 0.08733 & 0.07110 & 0.04489 \\
Rare/far & 1.45 & $-0.80$ & 0.45 & 0.35 & 0.02615 & 0.03550 & $-0.02009$ \\
\bottomrule
\end{tabular}
\end{table}

\begin{table}[!t]
\centering
\caption{Estimated mean gradient and component-wise Monte Carlo standard error
on the uniform contextual mixture.  The exact row has no sampling error.}
\label{tab:gradient_mcse}
\scriptsize
\begin{tabular}{lcc}
\toprule
Estimator & Mean $g_x\;\pm\;$MCSE & Mean $g_y\;\pm\;$MCSE \\
\midrule
Exact reference & 0.08143 & 0.00396 \\
Binary \grpo & $0.13220\pm0.00045$ & $0.00770\pm0.00044$ \\
2-D OLS-LOO & $-0.00307\pm0.00034$ & $-0.00489\pm0.00034$ \\
2-D Ridge-LOO & $-0.00304\pm0.00034$ & $-0.00490\pm0.00034$ \\
\scaresidual & $0.12396\pm0.00043$ & $0.00818\pm0.00042$ \\
Target-distance LOO & $-0.01560\pm0.00045$ & $-0.00416\pm0.00045$ \\
Full \sca & $0.12995\pm0.00043$ & $0.00738\pm0.00042$ \\
\scaresidual without final gate & $0.09891\pm0.00041$ & $0.00909\pm0.00040$ \\
\bottomrule
\end{tabular}
\end{table}

\subsection{Gradient error}
\label{app:gradient_error}

For a sampled response $Y_i$, let
$S_i=\nabla_\theta\log\pi_\theta(Y_i\mid x)$.
Before PPO clipping, the on-policy group update is
$\hat g=N^{-1}\sum_i S_i A_i$.
On the same sampled group, Residual and GRPO therefore satisfy
$\hat g_C=\hat g_G+X$, where
$X=N^{-1}\sum_i S_i(A_i^{\mathrm{Residual}}-A_i^{\mathrm{GRPO}})$.
Writing $g_R$ for the exact return gradient and expanding the squared errors
gives
\begin{gather}
\E\lVert\hat g_C-g_R\rVert_2^2-\E\lVert\hat g_G-g_R\rVert_2^2
=\E\lVert X\rVert_2^2+2\E\langle\hat g_G-g_R,X\rangle.
\label{eq:paired_mse}
\end{gather}
Thus gradient MSE decreases when the correction opposes the baseline error
strongly enough to offset its own squared magnitude.  This condition concerns
the score-weighted, normalized update, whereas reward fitting concerns
$r_i-\hat r_i$.  Section~\ref{sec:visualization} measures the former directly
against the exact gradient in the synthetic mixture.

\subsection{Controlled corruption}
\label{app:corruption}

We directly perturb one reward or one coordinate in $50{,}000$ synthetic groups
per scenario, while measuring the change on the other rollouts.  Table~\ref{tab:corruption}
reports gradient drift and the change in exact-gradient MSE.  The two estimators
are close in reward-flip settings; OLS is stronger for the balanced coordinate
spike, while the median ensemble has a slightly smaller MSE increase for the
sparse coordinate spike.  The preferred fit therefore depends on the
perturbation in these small-group settings.

\begin{table}[H]
\centering
\caption{Controlled corruption study; smaller is better for both metrics.}
\label{tab:corruption}
\scriptsize
\begin{tabular}{llrrr}
\toprule
Context & Corruption & Estimator & Drift & $10^3$ MSE $\Delta$ \\
\midrule
Balanced & reward flip & median ensemble & 0.146 & $-0.36$ \\
Balanced & reward flip & OLS ensemble & 0.145 & $-1.32$ \\
Balanced & coordinate spike & median ensemble & 0.042 & 9.24 \\
Balanced & coordinate spike & OLS ensemble & 0.038 & 8.21 \\
Sparse & reward flip & median ensemble & 0.021 & $-2.42$ \\
Sparse & reward flip & OLS ensemble & 0.022 & $-2.61$ \\
Sparse & coordinate spike & median ensemble & 0.009 & 2.35 \\
Sparse & coordinate spike & OLS ensemble & 0.009 & 2.40 \\
\bottomrule
\end{tabular}
\end{table}

\section{Prompt Templates and Recorded Cases}
\label{app:cases}

This appendix makes the evaluation interface concrete.  The prompt follows the
GUI-R1 instruction format and is shared across the grounding and action-prediction
benchmarks; only the admissible action set changes with the benchmark.  Each
recorded case below is drawn from the released evaluation replay and keeps the
original screenshot, instruction, coordinates, and evaluator outcome.

\subsection{Prompt templates}

The ScreenSpot and ScreenSpot-Pro runs use the following user message, where
\texttt{\{instruction\}} is the benchmark instruction and
\texttt{\{history\}} is the action history (\texttt{None} for a single-step
case):

\begin{scaprompt}{Grounding Prompt}
You are RUN1-R1, a reasoning GUI Agent Assistant. In this UI screenshot
\textless image\textgreater, I want you to continue executing the command
'\{instruction\}', with the action history being '\{history\}'.
Please provide the action to perform (enumerate from ['click']), the point where
the cursor is moved to (integer) if a click is performed, and any input text
required to complete the action.
Output the thinking process in \textless think\textgreater
\textless/think\textgreater tags, and the final answer in
\textless answer\textgreater \textless/answer\textgreater tags as follows:
\textless think\textgreater ... \textless/think\textgreater
\textless answer\textgreater[\{'action': 'click', 'point': [x, y],
'input\_text': 'no input text [default]'\}]\textless/answer\textgreater
\end{scaprompt}

OmniAct uses the same message skeleton with a benchmark-specific action set:

\begin{scaprompt}{Unified Action Prompt}
You are GUI-R1, a reasoning GUI Agent Assistant. In this UI screenshot
\textless image\textgreater, I want you to continue executing the command
'\{instruction\}', with the action history being '\{history\}'.
Please provide the action to perform (enumerate from
\{action\_set\}), the point where the cursor is moved to (integer) if
a click is performed, and any input text required to complete the action.
Output the thinking process in \textless think\textgreater
\textless/think\textgreater tags, and the final answer in
\textless answer\textgreater \textless/answer\textgreater tags as follows:
\textless think\textgreater ... \textless/think\textgreater
\textless answer\textgreater[\{'action': enum[...], 'point': [x, y],
'input\_text': 'no input text [default]'\}]\textless/answer\textgreater
\end{scaprompt}

\begin{scaprompt}{Benchmark Action Vocabularies}
\rmfamily
\begin{tabular}{@{}l>{\raggedright\arraybackslash}p{0.62\linewidth}@{}}
\toprule
Benchmark & Action vocabulary\\
\midrule
ScreenSpot / ScreenSpot-Pro & \texttt{click}\\
OmniAct-Web & \texttt{press\_tab, moveto, rightclick, press\_enter, scroll,}\\
& \texttt{click, press\_down, hotkey, press\_space, doubleclick}\\
OmniAct-Desktop & \texttt{click, moveto, press\_pgdn}\\
\bottomrule
\end{tabular}
\end{scaprompt}

The grounding script uses the identifier `RUN1-R1`; the OmniAct scripts use
`GUI-R1`.  For non-coordinate actions, the parser uses the sentinel point
\texttt{[-100,-100]}; pointer actions use the benchmark screenshot coordinate
frame.  SCA receives the parsed coordinates and scalar rewards through this
message/evaluator interface.

\subsection{Verifiable action fields}

Each answer is parsed as
$o_i=(o_i^{\mathrm{act}},o_i^{\mathrm{point}},o_i^{\mathrm{text}})$.  Evaluation
checks the action type and its applicable argument: an accepted coordinate for
pointer actions, token-set F1 $\geq0.5$ for text, and type match otherwise.  We
set $h_i=\Ind[o_i^{\mathrm{act}}=g^{\mathrm{act}}]
\Ind[\operatorname{ArgOK}(o_i,g)]$, with $h_i=0$ for malformed outputs.  Binary
runs use $r_i=h_i$; shaped runs use $h_i$ for SCA routing and the shaped scalar
for policy credit.

\clearpage
\subsection{Recorded evaluation cases}

\begin{scacase}{Case 1: Web Shopping Page}
\textbf{Source:} OmniAct-Web, output index 0, dataset row 0.\\
\textbf{Task:} examine the items that have been added to the shopping trolley
so far.\\
\textbf{History:} None.\\
\textbf{Parsed SCA output:} action = click; point = $(1276,114)$.\\
\textbf{Ground truth:} action = click; point = $(1287,115)$.\\
\textbf{Evaluation:} Correct action and accepted coordinate.
\tcblower
\begin{center}
\begin{tikzpicture}
\node[anchor=south west,inner sep=0] (caseimage) at (0,0)
  {\includegraphics[width=0.96\linewidth]{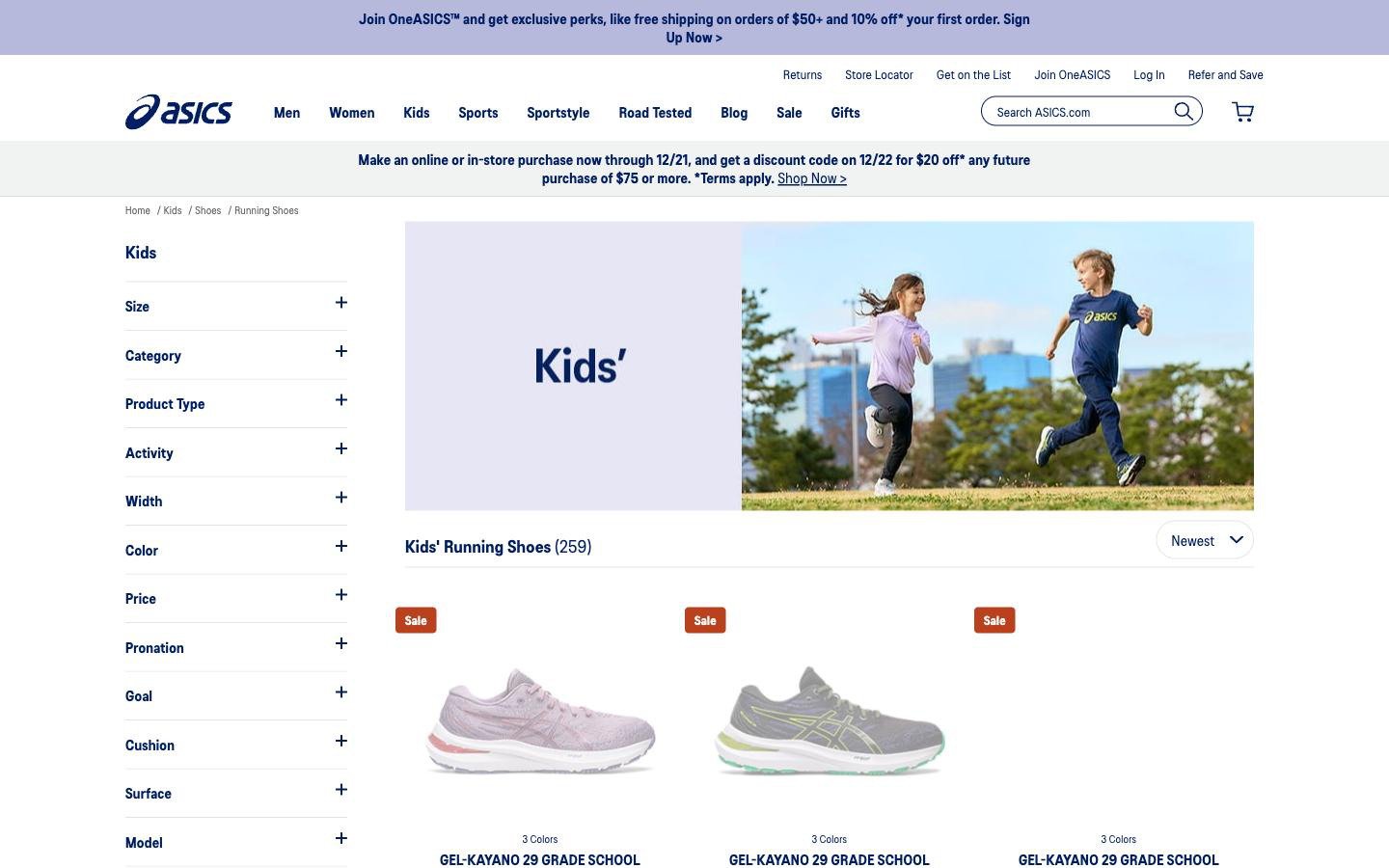}};
\begin{scope}[x={(caseimage.south east)},y={(caseimage.north west)}]
  \draw[guirorange,line width=0.9pt] (0.894,0.872)
    ellipse[x radius=0.012,y radius=0.019];
  \draw[scablue,line width=0.9pt] (0.886,0.873) ++(-0.010,0)--++(0.020,0);
  \draw[scablue,line width=0.9pt] (0.886,0.873) ++(0,-0.016)--++(0,0.032);
\end{scope}
\end{tikzpicture}
\end{center}
\centering\footnotesize Orange ellipse: ground-truth target. Blue cross: parsed SCA coordinate.
\end{scacase}

\clearpage
\begin{scacase}{Case 2: Desktop Map}
\textbf{Source:} OmniAct-Desktop, output index 8, dataset row 16.\\
\textbf{Task:} change the current map display to satellite view.\\
\textbf{History:} None.\\
\textbf{Parsed SCA output:} action = click; point = $(2465,37)$.\\
\textbf{Ground truth:} action = click; point = $(2413,39)$.\\
\textbf{Evaluation:} Correct action and accepted coordinate.
\tcblower
\begin{center}
\begin{tikzpicture}
\node[anchor=south west,inner sep=0] (caseimage) at (0,0)
  {\includegraphics[width=\linewidth]{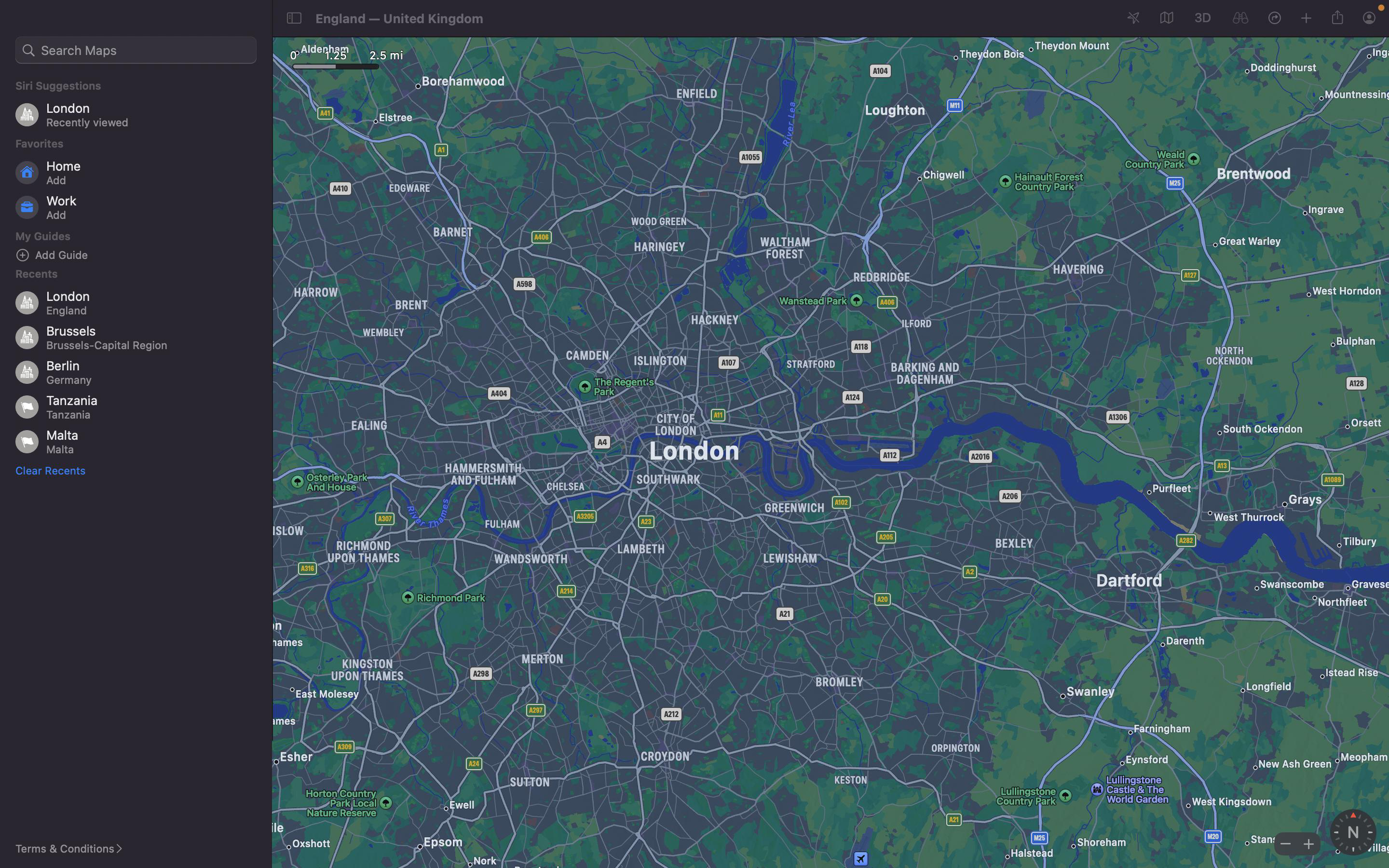}};
\begin{scope}[x={(caseimage.south east)},y={(caseimage.north west)}]
  \draw[guirorange,line width=0.9pt] (0.838,0.978)
    ellipse[x radius=0.010,y radius=0.016];
  \draw[scablue,line width=0.9pt] (0.856,0.979) ++(-0.009,0)--++(0.018,0);
  \draw[scablue,line width=0.9pt] (0.856,0.979) ++(0,-0.014)--++(0,0.028);
\end{scope}
\end{tikzpicture}
\end{center}
\centering\footnotesize Orange ellipse: ground-truth target. Blue cross: parsed SCA coordinate.
\end{scacase}

\clearpage
\begin{scacase}{Case 3: Web Double-Click Action}
\textbf{Source:} OmniAct-Web, output index 546, dataset row 1092.\\
\textbf{Task:} execute double click function on
\texttt{search\_store\_locations} tab.\\
\textbf{History:} None.\\
\textbf{Parsed SCA output:} action = doubleclick; point = $(1205,14)$.\\
\textbf{Ground truth:} action = doubleclick; point = $(1221,14)$.\\
\textbf{Evaluation:} Correct action and accepted coordinate.
\tcblower
\begin{center}
\begin{tikzpicture}
\node[anchor=south west,inner sep=0] (caseimage) at (0,0)
  {\includegraphics[width=0.96\linewidth]{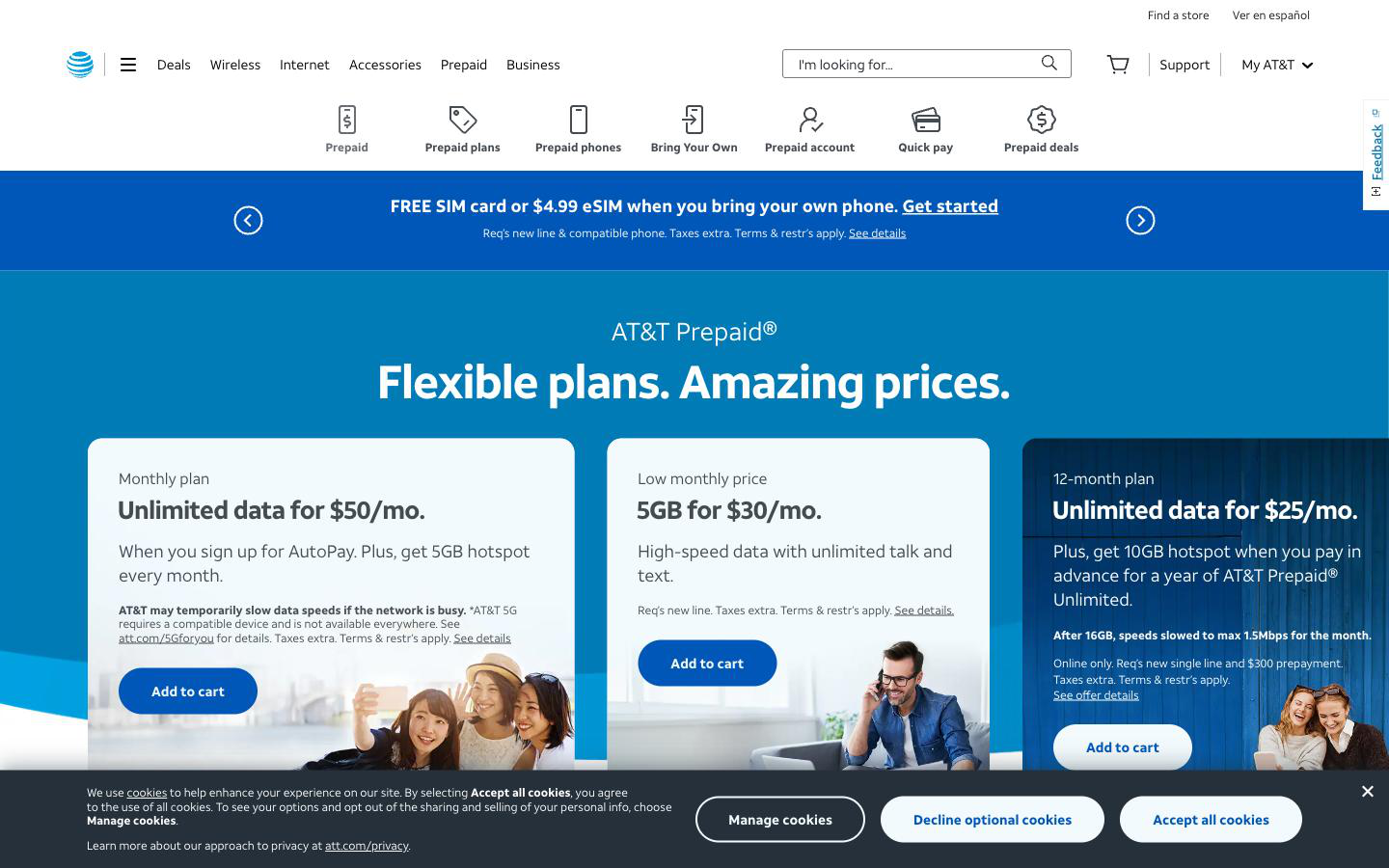}};
\begin{scope}[x={(caseimage.south east)},y={(caseimage.north west)}]
  \draw[guirorange,line width=0.9pt] (0.848,0.984)
    ellipse[x radius=0.012,y radius=0.019];
  \draw[scablue,line width=0.9pt] (0.837,0.984) ++(-0.010,0)--++(0.020,0);
  \draw[scablue,line width=0.9pt] (0.837,0.984) ++(0,-0.016)--++(0,0.032);
\end{scope}
\end{tikzpicture}
\end{center}
\centering\footnotesize Orange ellipse: ground-truth target. Blue cross: parsed SCA coordinate.
\end{scacase}

\noindent\textbf{Case summary.}
The three cases cover a standard web click, a high-resolution desktop click, and
a double-click action.  They illustrate the prediction and evaluation interface;
the rollout-level credit construction is shown separately by the deterministic
mixed-group example in Appendix~\ref{app:info_checks}.

\end{document}